\providecommand\@LN[1]{}  
\newcommand{\indep}{\rotatebox[origin=c]{90}{$\models$}}
\DeclareRobustCommand\model{\mathrel{|}\joinrel\mkern-.5mu\mathrel{\neq}}
\newcommand{\ndep}{\rotatebox[origin=c]{90}{$\model$}}
\begin{document}
	
	\jname{arXiv Preprint}

	
	\markboth{P. Heng et~al.}{style}

	\title{Structural Dimension Reduction in Bayesian Networks}
	
	\author{Pei Heng}
	\affil{KLAS and School of Mathematics and Statistics, Northeast Normal University,\\ Changchun, China
		\email{peiheng@nenu.edu.cn}}
		
		
	\author{Jianhua Guo}
	\affil{School of Mathematics and Statistics, Beijing Technology and Business University,\\ Beijing, China
		\email{jhguo@btbu.edu.cn}}
	
	\author{Yi Sun}
	\affil{College of Mathematics and System Science, Xinjiang University,\\ Urumqi, China \email{brian@xju.edu.cn}}

	\maketitle

\begin{abstract}
This work introduces a novel technique, named structural dimension reduction, to collapse a Bayesian network onto a minimum and localized one while ensuring that probabilistic inferences between the original and reduced networks remain consistent. To this end, we propose a new combinatorial structure in directed acyclic graphs called the directed convex hull, which has turned out to be equivalent to their minimum localized Bayesian networks. An efficient polynomial-time algorithm is devised to identify them by determining the unique directed convex hulls containing the variables of interest from the original networks. Experiments demonstrate that the proposed technique has high dimension reduction capability in real networks, and the efficiency of probabilistic inference based on directed convex hulls can be significantly improved compared with traditional methods such as variable elimination and belief propagation algorithms. The code of this study is open at \href{https://github.com/Balance-H/Algorithms}{https://github.com/Balance-H/Algorithms} and the proofs of the results in the main body are postponed to the appendix.
\end{abstract}

\keywords{Bayesian network, Collapsibility, Directed convex hull, Probabilistic inference, Structural dimension reduction}

\renewcommand\thefootnote{\fnsymbol{footnote}}
\setcounter{footnote}{1}

\section{Introduction}\label{sec-1}	
Bayesian networks (BNs) integrate probability theory and graph theory to represent intricate relationships among interconnected variables within a system. They are an effective tool for modeling complex and uncertain systems, with a methodology that relies on directed acyclic graphs (DAGs). In these DAGs, vertices typically represent random variables, and directed edges signify dependencies among variables. Due to their interpretability and scalability, BNs have become popular graphical models and are increasingly applied to a wide range of tasks such as intelligence reasoning~\cite{koller2009probabilistic}, machine learning \cite{ji2019probabilistic}, decision analysis \cite{fenton2018}, and more.

Inference in BNs refers to deriving information about unknown variables based on known ones in the network. The core task of inference is to calculate the marginal distribution, which involves computing the probability distribution of specific variables of interest by summing over all possible values of the other variables. The application of inference in BNs is widely used in various fields, including but not limited to gene inference \cite{2016Inference}, healthcare \cite{KYRIMI2021102108}, finance \cite{sanford2015operational}, and educational assessment \cite{Culbertson2016BayesianNI}.

To perform marginal inference in BNs, techniques such as variational inference \cite{jordan1999introduction}, importance sampling \cite{pearl1988probabilistic}, variable elimination \cite{zhang1994asim}, belief propagation \cite{nath2010efficient}, and Markov Chain Monte Carlo (MCMC) methods \cite{neal1993probabilistic} are commonly employed. Most of these techniques can efficiently make inferences based on the observed data if the networks are small. Among these methodologies, variational inference, importance sampling, and MCMC are approximate inference methods. While these methods are useful when exact analytical solutions are unfeasible or computationally expensive, they still face challenges in terms of computational complexity, accuracy, and high variance when dealing with large-scale networks and high-dimensional data \cite{blei2017variational,robert1999monte}. On the other hand, variable elimination and belief propagation are exact inference techniques. Both provide systematic and efficient methods for calculating exact posterior probabilities and performing probabilistic inference, although their applicability depends on the structure and complexity of the graphical models under consideration \cite{zhang1994asim,nath2010efficient}.

In recent decades, various authors have studied marginalization in BNs to improve the efficiency of statistical inference in graphical models. For instance, to capture conditional independence relations in marginal distributions, they have developed several classes of graphical models, including the DAG model as a subclass. Examples include MC-graphs \cite{koster2002marginalizing}, ribbonless graphs \cite{sadeghi2013stable,sadeghi2016marginalization}, ancestral graphs \cite{richardson2002ancestral}, and acyclic mixed graphs \cite{evans2016graphs,forre2017markov}. These methods often require adding various types of edges to represent independence maps of marginal distributions. This operation alters the topology of these networks, leading to reparameterization and increasing model complexity and estimation variance. Consequently, this escalation in complexity and variance inevitably raises the computational demands and storage requirements during probabilistic inference.

During probabilistic queries or inference on variables of interest, numerous redundant variables frequently offer no meaningful information. These redundant variables significantly increase the complexity of inference calculations. This observation underscores the necessity of developing a structural dimension reduction mechanism, facilitating the transformation of a BN into a localized form while preserving the equivalence of inference outcomes between the original and reduced networks. It is widely recognized that barren vertices, which are not included in the query variables, can be safely removed from the graph without compromising the accuracy of the inference. Therefore, the initial step in dimensionality reduction involves eliminating these irrelevant vertices. Numerous researchers have introduced various reduction rules, leading to the development of distinct local models.

Darwiche \cite{darwiche2009} removed all outgoing edges from evidence vertices and reduced the corresponding conditional probability tables (CPTs), showing that inference in this sub-model is accurate. Lin and Druzdzel \cite{lin1997} proposed relevance reasoning, which collapses all vertices that are not on the path between the evidence vertices and the variable of interest and recalculates the CPTs for the variables whose parent vertices have changed. However, further reductions in these two sub-models are driven by the evidence variables, which implies that no additional variables can be collapsed when the evidence variables are empty. Furthermore, these methods require prior knowledge of the CPTs of the original model before reduction and necessitate recalculating the CPTs of certain vertices after reduction, which can be quite complex in high-dimensional cases. Finally, current structural learning algorithms can only learn a single graph from the equivalence class of DAGs. However, the ability of these methods to collapse may vary significantly across different equivalent DAGs. For example, in a directed path $L=v_0\rightarrow v_1\rightarrow,\dots,\rightarrow v_{n-1} \rightarrow v_n $ of length $n$, when the variables of interest are $v_{n-1}$ and $v_n$, and the evidence variables are empty, both Darwiche's method and Lin and Druzdzel's method cannot collapse any vertices. In the equivalent DAG $v_0\leftarrow v_1\leftarrow,\dots,\leftarrow v_{n-1} \leftarrow v_n $, however, both methods can collapse $n - 1$ vertices.

This idea of the above arguments is illustrated in the middle routine of Fig.~\ref{Fig3}, where the other two routines correspond to the aforementioned techniques, such as approximate or exact inference and marginalization operations, respectively.

\begin{figure*}[t]
	\centering
	\begin{tikzpicture}[scale=0.54]
		\node at (-1.5,0){$P(x_V)$};
		\node at (5,2.5){{\footnotesize Approximate\ inference}};
		\node at (5,2){{\footnotesize or\ Exact \ inference }};
		\node at (5.2,-2.3){{\footnotesize Marginalization}};
		\node at (3.1,0.5){$R\subseteq H\subseteq V$};
		\node at (3.1,-0.5){$P(x_H)=P_{G_H}(x_H)$};
		
		\node at (7.8,0){$\hat{P}_{G_H}(x_H)$};
		\node at (12,0){$\hat{P}(x_R)$};			
		
		\draw[->,ultra thick] (7.6,2.2)--(11,0.5);
		\draw[->,ultra thick] (7.3,-2.2)--(11,-0.5);			
		\draw[->,ultra thick] (-0.3,0)--(6,0);
		\draw[->,ultra thick] (-0.5,0.5)--(2.4,2.2);
		\draw[->,ultra thick] (-0.5,-0.5)--(3.2,-2.2);			
		\draw[->,ultra thick] (9.4,0)--(11,0);			
	\end{tikzpicture}
	\caption{Techniques for inference of marginal probability, where $P(x_W)$ is a joint probability distribution over the random vector $x_W=\{x_w\}_{w\in W}$  in a BN and $\hat{P}(x_W)$  is its estimation.}
\label{Fig3}
\vspace{-5mm}
\end{figure*}
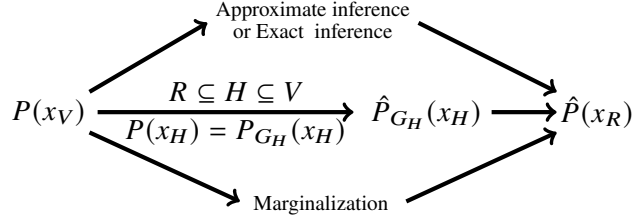

It is worthwhile to note that the proposed structural dimension reduction will benefit us in at least four aspects:
\begin{itemize}
	\item It enables direct calculations or the application of existing inference algorithms for probabilistic inference on smaller, localized BNs, thereby improving reasoning efficiency.
	\item In high-dimensional settings, collapsing irrelevant variables helps in handling small-sample data and missing data, thus avoiding the curse of dimensionality and improving inference accuracy.
	\item By collapsing onto a smaller, localized BN, computational efficiency is improved as fewer parameters need to be estimated.
	\item The dimensionality reduction capability, which is described in subsection \ref{sec-5.1} is not affected by structural learning algorithms, meaning it has the same reduction capability in equivalent DAGs.
\end{itemize}

In this paper, we revisit the structural dimensionality reduction of BNs from the perspective of collapsibility. Collapsibility means that the implied model for the family of marginal distributions over the subset of variables of interest is equivalent to the graphical model that is Markov relative to the induced subgraph by the subset \cite{frydenberg1990marginalization, wang2011finding}. As the foundation of structural dimension reduction, it essentially involves identifying the minimum collapsible set that includes the variables of interest. For recent work on collapsibility in BNs, readers can refer to \cite{kim2006note, xie2009collapsibility}. Additionally, several statisticians have developed fruitful results on collapsibility in undirected graphical models, such as \cite{asmussen1983, ducharme1986testing, frydenberg1990marginalization, liu2013collapsibility, wermuth1987parametric, whittaker2009graphical, whittemore1978collapsibility, madigan1990extension, wang2011finding}.

To achieve structural dimension reduction in BNs, we have devised an algorithm called the Close Minimal D-separator Absorption (CMDSA) algorithm, which has polynomial-time complexity for identifying the minimum collapsible set (i.e., the minimum and localized BN) containing variables of interest in a BN. This leads to more efficient probabilistic inference based on the localized BN.

Specifically, the theoretical contributions of this work can be summarized as follows:

\begin{itemize}
	\item We establish the equivalence between the minimum model collapsible set containing variables of interest and the directed convex hull including these relevant variables in a given directed graphical model.
	\item We discover and provide a specific connection between the directed convex hull and the minimal d-separation set.
	\item Based on this connection, we propose a polynomial-time complexity algorithm named CMDSA to find the minimum collapsible sets containing relevant variables.
\end{itemize}

The remaining structure of this paper is organized as follows. Section~\ref{sec-2} provides the necessary preliminary knowledge to make the paper self-contained. In Section~\ref{sec-3}, we propose the concept of a directed convex subgraph in DAGs and then detail its properties to devise an efficient algorithm for identifying it. Additionally, we show that the minimum collapsible set is equivalent to the directed convex hull under the assumption of faithfulness. In Section~\ref{sec-4}, we discuss the relationship between directed convex subgraphs and minimal d-separators, based on which we design the CMDSA algorithm for identifying the minimum directed convex subgraph containing variables of interest. In Section~\ref{sec-5}, we conduct experimental studies to demonstrate the validity and practicality of the proposed methodology. Finally, the conclusion and discussion are summarized in Section~\ref{sec-6}.

\section{Preliminaries}\label{sec-2}

A DAG, denoted by $G=(V, E)$, consists of a finite set of vertices $V$, corresponding to random variables $x_V=\{x_v\}_{v\in V}$ whose values are from the sample space $\mathcal{X}_V=\bigotimes_{v\in V}\mathcal{X}_v$, and a set of directed edges $E$ that captures dependencies among distinct variables. A directed edge from vertex $u$ to $v$ in $G$ is denoted by $(u, v)$. If $(u, v)\in E$, we say that $u$ is the parent of $v$, and $v$ is the child of $u$. Let $pa_G(v)$ (resp. $ch_G(v)$) represent the set of all parent (resp. child) vertices of $v$ in $G$. We use $u\overset{G}\sim v$ to denote the presence of an edge between $u$ and $v$ in graph $G$, indicating that they are adjacent in $G$.

A path connecting $u,v$ in $G$, denoted by $l_{uv}$, is composed of a series of distinct vertices $(u = u_0, u_1,\dots,u_k = v)$ for which $u_{i-1}\overset{G}\sim u_{i}$ for $i\in [k]:=\{1,\ldots,k\}$. The set of all vertices on $l_{uv}$ is written as $V(l_{uv}) = \{u_0, u_1,\dots,u_k \}$,  and the intermediate vertices are $V^o(l_{uv})=V(l_{uv})\setminus\{u,v\}$. Let $V^c(l_{uv})= \{w:\ \exists\ w_1\rightarrow w\leftarrow w_2\ \subseteq\ l_{uv}\}$ and $V^{nc}(l_{uv})= V^o(l_{uv})\backslash V^c(l_{uv})$ denote the collider and non-collider vertices on the path  
$l_{uv}$, respectively. For $k\geq 1$, if on the path $l_{uv}$, $(u_{i-1}, u_{i})\in E$ for all $i\in [k]$, we say that $l_{uv}$ is a directed path from $u$ to $v$, and $u$ (resp. $v$) is  an ancestor (resp. descendant) of $v$ (resp, $u$). We use the symbol $an_G(u)$ (resp. $de_G(u))$ to represent the set of all ancestors (resp. descendants) of $u$ in $G$. For a subset $A\subseteq V$, the ancestor closure set of $A$ in $G$ is written as $An_G(A) = \cup_{v\in A}an_G(v)\cup A$, and  its ancestor set is defined by $an_G(A) = An_G(A)\backslash A$. Let $mb_G(A) = \cup_{v\in A}\left(pa_{G}(v)\cup ch_{G}(v) \cup pa_G(ch_G(v))\right)\backslash A$ denote the Markov boundary of $A$. 
Additionally, the induced subgraph of $G$ on $A$ is defined as $G_A=(A,E_A)$, where $E_A = E\cap (A\times A)$.  

\begin{definition}[D-connectedness]\label{d-connectedness}
	Given a DAG $G=(V,E)$ and $R\subseteq V$. For $M\subseteq V\backslash R$, we say that the induced subgraph $G_M$ is d-connected in $G$ (With respect to) w.r.t. $R$ if for any two distinct vertices $u,v\in M$ there exists a path $l_{uv}$ in $G$ such that $V^o(l_{uv})\cap R \subseteq V^c(l_{uv})$. Further, if $G_M$ is d-connected w.r.t. $R$ and for any $M^\prime$ satisfying $ M \subsetneq M^\prime \subseteq V\backslash R$, $G_{M^\prime}$ is not d-connected in $G$ w.r.t. $R$, we say that it is maximal w.r.t. $R$ and also say that $G_M$ or $M$ is a d-connected component in $G$ w.r.t. $R$.
\end{definition}

The moral graph of $G$ is an undirected graph, denoted by $G^m = (V,E^m)$, in which $E^m = \{\langle u,v\rangle:\ u\overset{G}\sim v\ \text{or there is a vertex}\ w\in V\  \text{such that } u,v\in pa_G(w)\}$.  It is not difficult to show that a d-connected component in $G$ must be connected in its moral graph. For ease of understanding, here is an example.

\begin{example}
	Let $G=(V,E)$ be the DAG shown in Figure~\ref{fig-22} (A) and $R=\{a,i\}$. Then the subgraph $G_{\{b,c,d,e,f,g\}}$ shown in  Figure~\ref{fig-22} (B) is d-connected w.r.t. $R$. 
	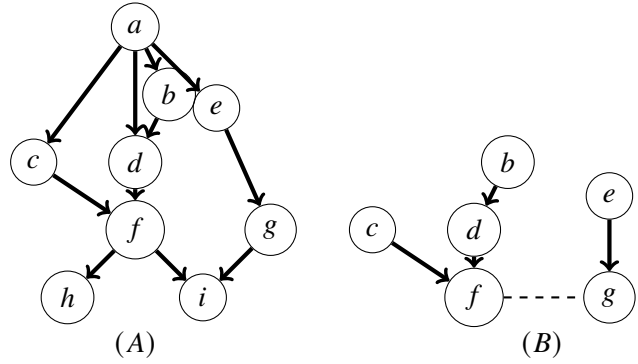
\begin{figure*}[t]
		\centering
		\begin{tikzpicture}[scale=0.9]
			\node[shape=circle, draw=black] (1) at (0,0){$a$};
			\node[shape=circle, draw=black] (2) at (0.5,-1){$b$};
			\node[shape=circle, draw=black] (3) at (-1.5,-2){$c$};
			\node[shape=circle, draw=black] (4) at (0,-2){$d$};
			\node[shape=circle, draw=black] (5) at (1.2,-1.2){$e$};
			\node[shape=circle, draw=black] (6) at (0,-3){$f$};
			\node[shape=circle, draw=black] (7) at (2,-3){$g$};
			\node[shape=circle, draw=black] (8) at (-1,-4){$h$};
			\node[shape=circle, draw=black] (9) at (1,-4){$i$};
			\draw[->,ultra thick] (1)--(2);
			\draw[->,ultra thick] (1)--(3);
			\draw[->,ultra thick] (1)--(4);
			\draw[->,ultra thick] (1)--(5);
			\draw[->,ultra thick] (2)--(4);
			\draw[->,ultra thick] (5)--(7);
			\draw[->,ultra thick] (4)--(6);	
			\draw[->,ultra thick] (3)--(6);	
			\draw[->,ultra thick] (6)--(8);	
			\draw[->,ultra thick] (6)--(9);	
			\draw[->,ultra thick] (7)--(9);	
			\node at (0,-4.7){$(A)$};
		\end{tikzpicture}
		\hspace{0.5cm}
		\begin{tikzpicture}[scale=0.9]
		\node[shape=circle, draw=black] (2) at (0.5,-1){$b$};
		\node[shape=circle, draw=black] (3) at (-1.5,-2){$c$};
		\node[shape=circle, draw=black] (4) at (0,-2){$d$};
		\node[shape=circle, draw=black] (5) at (2,-1.5){$e$};
		\node[shape=circle, draw=black] (6) at (0,-3){$f$};
		\node[shape=circle, draw=black] (7) at (2,-3){$g$};
		\draw[->,ultra thick] (2)--(4);
		\draw[->,ultra thick] (5)--(7);
		\draw[->,ultra thick] (4)--(6);	
		\draw[->,ultra thick] (3)--(6);	
		\draw[-,dashed,thick] (6)--(7);
		\node at (1,-3.7){$(B)$};			
		\end{tikzpicture}
		\caption{(A) A DAG $G=(V,E)$ with the set of interest $R=\{a,i\}$. (B) The d-connected subgraph $G_{\{b,c,d,e,f,g\}}$ w.r.t. $R$. }
		\label{fig-22}
	\end{figure*}
\end{example}

We say that a path $l_{uv}$ in $G$ is blocked by a subset $S\subseteq V\backslash \{u,v\}$ if one of the following two conditions is satisfied: (1) $S\cap V^{nc}(l_{uv}) \neq \emptyset$; (2) $\exists \ w\in V^c(l_{uv})$ such that $S\cap (de_G(w)\cup \{w\}) = \emptyset$. For pairwise disjoint subsets $X, Y, Z\subseteq V$, if all paths from $X$ to $Y$ are blocked by $Z$, we say that $X$ is d-separated from $Y$ by $Z$ in $G$ and also say that $Z$ is an XY-d-separator in $G$, denoted by $X \indep Y\ |\ Z [G]$. Otherwise, we say that $X$ and $Y$ are not d-separated given $Z$, rewritten as $X \ndep Y\ |\ Z [G]$.   
Under the criterion of d-separation, the graph $G$ induces an independence model, denoted by $I(G)$,  i.e., $I(G) = \{\langle X,Y|Z\rangle:\ X  \indep Y\ |\ Z [G] \text{~with pairwise disjoint subsets~} X,Y,Z\subseteq V\}.$
For a vertex subset $A\subseteq V$, $I(G)$ has a natural projection on $A$, denoted as $I(G)_A $, which is defined as follows: $I(G)_A  = \{\langle X,Y|Z\rangle:\ X \indep Y\ |\ Z [G] \text{ with pairwise disjoint subsets }X,Y,Z\subseteq A\}$.  Generally, $I(G)_A\neq I(G_A)$ but it is easy to show that $I(G)_A\subseteq I(G_A)$.

A BN, denoted by $\mathcal{B} = (G, \mathcal{P})$, consists of a DAG $G=(V,E)$ and a probability distribution family $\mathcal{P}:=\mathcal{P}(G)$ such that $P$ is Markovian w.r.t. $G$ for all $P\in \mathcal{P}$, i.e., $P(x_V) = \prod_{v\in V}P(x_v|x_{pa_G(v)})$, where $P(x_v|x_{pa_G(v)})$ is the conditional probability distribution of $x_v$ given $x_{pa_G(v)}$. We say that $G$ is faithful to $\mathcal{P}(G)$ if there is a distribution $Q\in \mathcal{P}(G)$ such that $I(G)=I(Q)$ where $I(Q)= \{\langle A,B|C\rangle:\ x_A  \indep x_B\ |\ x_C [Q] \text{~with pairwise disjoint subsets~} A,B,C\subseteq V\}$. Here
$x_A  \indep x_B\ |\ x_C [Q]$ represents that $x_A$ is independent of $x_B$ given $x_C$ in $Q$. 

Similar to the independence model induced by $G$, the distribution family $\mathcal{P}(G)$ has a projection onto a subset $A\subseteq V$ as well, which is given by 
$\mathcal{P}(G)_A = \left\{P_A:\ P_A = \int_{\mathcal{X}_{V\setminus A}}dP(x_V)\ \text{for}\ P\in \mathcal{P}(G)\right\}$, where $P_A$ is the marginal distribution of $P$ over $x_A$. Generally, $\mathcal{P}(G)_A\neq \mathcal{P}(G_A)$. Given a BN  $\mathcal{B} = (G, \mathcal{P})$ with $G=(V, E)$ and $A\subseteq V$,  we now describe two kinds of collapsibility in BNs that are useful in this work:

\begin{itemize}
	\item $\mathcal{B}$ is conditional independence collapsible (CI-collapsible) onto $A$ if $I(G)_A=I(G_A)$;
	\item $\mathcal{B}$ is model collapsible (M-collapsible) onto $A$ if $\mathcal{P}(G)_A=\mathcal{P}(G_A)$.
\end{itemize}
Under faithfulness, it has been proved that CI-collapsibility is equivalent to M-collapsibility \cite{xie2009collapsibility}.

\section{Directed convex subgraph with its properties}\label{sec-3}

This section extends the concept of convex subgraphs \cite{diestel1990graph, wang2011finding} in undirected graphs to directed convex subgraphs in DAGs. To begin with, we emphasize the concept of an inducing path, as proposed in \citet{verma2022equivalence}.

\begin{definition}[Inducing path]\label{def-1}
	For two distinct non-adjacent vertices $u,v\in R\subseteq V$ in $G$, we say that a path $l_{uv}$ connecting $u,v$ in $G$ is an inducing path of $R$ if  $R \cap V^{o}(l_{u v}) \subseteq V^{c}(l_{u v}) \subseteq an_G(\{u, v\}).$ In this case, we refer to $\{u,v\}$ as an \textbf{information pair} (abbreviated as i-pair) of $R$.
\end{definition}

We represent the set of all i-pairs of $R$ in $G$ as $\mathcal{I}_G(R)=\{\{u,v\}:\{u,v\} \text{~is~an~} \text{i-pair~of~} R \text{~in~}G\}.$ For example, in Figure \ref{fig-22}, 
$I_G(R) = \{\{a,i\}\}$, since the path $l_{ai} (=a\rightarrow e \rightarrow g\rightarrow i)$ is an induced path of $R=\{a,i\}$. When performing dimensionality reduction, it is crucial to preserve the inducing paths of certain connection information pairs; otherwise, incorrect conditional independence relationships may emerge. However, efficiently identifying inducing paths in high-dimensional graphs remains a significant challenge. The following result provides us with an efficient way to identify information pairs.

\begin{lemma}\label{thm-3.2}
	Let $G=(V,E)$ be a DAG, $R =V\backslash M$ for $M\subseteq V$ and $u,v\in R$ be two non-adjacent vertices. Then the following statements are equivalent:
	\begin{itemize}
		\item[(i)] $\{u,v\}\in \mathcal{I}_G(R)$;
		\item[(ii)] There is a path $\rho_{uv}$ connecting $u,v$ in $(G_{An_G(\{u,v\})})^m$ such that $V^o(\rho_{uv})\subseteq M$;
		\item[(iii)] $u \ndep v\ |\ an_G(\{u,v\})\backslash M[G] $;
		\item[(iv)]  $u  \ndep v\ |\ Z [G]$, for any $Z\subseteq R\backslash \{u,v\}$.
	\end{itemize}
\end{lemma}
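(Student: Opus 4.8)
The plan is to prove the four statements equivalent by establishing the cycle (iv) $\Rightarrow$ (iii) $\Rightarrow$ (ii) $\Rightarrow$ (i) $\Rightarrow$ (iv), so that each implies the next and the loop closes; two of these arrows are bookkeeping and two carry the real content. For (iv) $\Rightarrow$ (iii) I would simply observe that $an_G(\{u,v\})\setminus M = an_G(\{u,v\})\cap R$ is a legitimate instance of the quantified set $Z\subseteq R\setminus\{u,v\}$ in statement (iv): since $an_G(\{u,v\})=An_G(\{u,v\})\setminus\{u,v\}$ excludes $u,v$ and its complement of $M$ lies in $R$, the universally quantified non-separation in (iv) specializes immediately to (iii).

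For (iii) $\Rightarrow$ (ii) I would invoke the standard moralization criterion for d-separation. Writing $Z:=an_G(\{u,v\})\setminus M\subseteq an_G(\{u,v\})$, we have $An_G(\{u,v\}\cup Z)=An_G(\{u,v\})$, so $u\ndep v\mid Z\,[G]$ is equivalent to $u,v$ being not separated by $Z$ in the moral graph $(G_{An_G(\{u,v\})})^m$. Any connecting path $\rho_{uv}$ there has interior $V^o(\rho_{uv})\subseteq An_G(\{u,v\})\setminus\{u,v\}=an_G(\{u,v\})$ and avoids $Z$; since a vertex of $an_G(\{u,v\})$ lies outside $Z$ exactly when it lies in $M$, this forces $V^o(\rho_{uv})\subseteq M$, which is precisely (ii).

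The first substantial step is (ii) $\Rightarrow$ (i): lifting a moral-graph path $\rho_{uv}$ with interior in $M$ to an inducing path of $R$ in $G$. Each edge of $\rho_{uv}$ is either a genuine edge of $G_{An_G(\{u,v\})}$, which I keep, or a marrying edge arising from a common child $c\in An_G(\{u,v\})$ of its endpoints, which I expand into a collider $\cdot\rightarrow c\leftarrow\cdot$. This yields a walk from $u$ to $v$ in $G$ whose non-inserted interior vertices lie in $M$ and whose inserted vertices $c$ are colliders lying in $An_G(\{u,v\})$. I would then extract a simple path and verify it is inducing, i.e. $R\cap V^o\subseteq V^c\subseteq an_G(\{u,v\})$: the $M$-vertices contribute no $R$-vertex off the colliders, and every inserted vertex is an ancestral collider. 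The delicate point---and the main obstacle---is that shortcutting a walk into a simple path can turn colliders into non-colliders and vice versa, so the extraction must be chosen carefully (a shortest such walk, or a minimal rerouting, deleting repeated vertices including any occurrence of $u,v$ as an interior $c$) to guarantee that every interior $R$-vertex of the final path remains a collider and every collider remains an ancestor of $\{u,v\}$.

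The second substantial step is (i) $\Rightarrow$ (iv): an inducing path $l_{uv}$ cannot be blocked by any $Z\subseteq R\setminus\{u,v\}$. I would argue by induction on the number of colliders of $l_{uv}$. If $l_{uv}$ has no collider, all its interior vertices are non-colliders, hence lie in $M$ and so outside $Z$, and $l_{uv}$ is active given $Z$. If some collider $w$ is closed given $Z$, i.e. $(de_G(w)\cup\{w\})\cap Z=\emptyset$, I use $w\in an_G(\{u,v\})$ to select a directed path from $w$ to an endpoint; its interior lies in $de_G(w)$ and hence avoids $Z$, so it is active, and splicing it onto the initial segment of $l_{uv}$ converts $w$ into a non-collider lying outside $Z$ while strictly dropping the collider count, letting the induction proceed after pruning repeated vertices. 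The bookkeeping here---ensuring the spliced object is a genuine active path and that the collider count truly decreases---is the analogue on this side of the walk-to-path subtlety above, and is where I expect to spend most of the care.
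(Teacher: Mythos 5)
Your proposal is correct and takes essentially the same route as the paper: the paper likewise proves (iv) $\Rightarrow$ (iii) by specializing $Z=an_G(\{u,v\})\setminus M$, (iii) $\Rightarrow$ (ii) via the moralization criterion on $G_{An_G(\{u,v\})}$, (ii) $\Rightarrow$ (i) by expanding marrying edges into v-structures, and (i) $\Rightarrow$ (iv) by iteratively rerouting a blocked collider through a directed path to an endpoint, reducing the collider count. The only differences are organizational (you use one cycle where the paper also proves (i) $\Rightarrow$ (ii) directly), and you are, if anything, more explicit than the paper about the walk-to-path extraction issues in the two substantial steps.
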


Lemma~\ref{thm-3.2} provides several equivalent conditions for identifying i-pairs. This lemma provides a straightforward method for recognizing i-pairs, which are essential for designing algorithms for structural dimension reduction.

\begin{definition}[D-convexity]\label{def-d-convexity}
	Let $G=(V,E)$ be a DAG and $H\subseteq V$. We say that the induced subgraph $G_H$ is \textbf{directed convex} (or d-convex for short) if there is no inducing path of $H$ in $G$.
\end{definition}

\begin{theorem}\label{thm-3.5}
	Let $G=(V,E)$ be a DAG and $R\subseteq V$. Then the following statements are equivalent:
	\begin{itemize}
		\item[(i)] $G_R$ is a d-convex subgraph of $G$;
		\item[(ii)] $I(G)_R=I(G_R)$.
	\end{itemize}
\end{theorem}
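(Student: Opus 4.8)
The plan is to establish the two inclusions between $I(G)_R$ and $I(G_R)$ separately, proving (ii)$\Rightarrow$(i) through its contrapositive and (i)$\Rightarrow$(ii) by a rerouting argument. Throughout I write $M=V\setminus R$. Since the Preliminaries already record that $I(G)_A\subseteq I(G_A)$ holds unconditionally, the whole theorem reduces to showing that $G_R$ being d-convex is equivalent to the reverse inclusion $I(G_R)\subseteq I(G)_R$.

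For (ii)$\Rightarrow$(i) I would argue the contrapositive. If $G_R$ is not d-convex, then by Definition~\ref{def-d-convexity} there is an i-pair $\{u,v\}\in\mathcal I_G(R)$, with $u,v$ non-adjacent in $G$ and hence in the induced subgraph $G_R$. I first invoke the elementary fact that two non-adjacent vertices of any DAG are d-separated by the ancestor set of the pair; applying it inside $G_R$ gives $u\indep v\mid an_{G_R}(\{u,v\})\,[G_R]$, so $\langle\{u\},\{v\}\mid an_{G_R}(\{u,v\})\rangle\in I(G_R)$. On the other hand, $an_{G_R}(\{u,v\})\subseteq R\setminus\{u,v\}$, so the i-pair characterization Lemma~\ref{thm-3.2}(iv) yields $u\ndep v\mid an_{G_R}(\{u,v\})\,[G]$, placing the very same triple outside $I(G)_R$. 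Hence $I(G_R)\neq I(G)_R$, i.e. (ii) fails.

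For (i)$\Rightarrow$(ii) I want $I(G_R)\subseteq I(G)_R$; equivalently, for pairwise-disjoint $X,Y,Z\subseteq R$, I must show that $X\ndep Y\mid Z\,[G]$ implies $X\ndep Y\mid Z\,[G_R]$. Set $W=X\cup Y\cup Z$, $A=An_G(W)$ and $A_R=A\cap R$. The first step is a structural consequence of d-convexity: any directed path between two $R$-vertices whose interior lies in $M$ is an inducing path of $R$ unless its endpoints are adjacent, in which case acyclicity fixes the orientation and yields an edge of $G_R$; iterating this forces $An_{G_R}(W)=A_R$, which lets me compare the two marginal models through the moralization criterion on the \emph{same} ancestral set. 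Then $X\ndep Y\mid Z\,[G]$ supplies a path in $(G_A)^m$ from $X$ to $Y$ avoiding $Z$, and it suffices to reroute this path inside $A_R$: I would process each maximal subpath whose interior lies in $M$ and show that its two flanking vertices $a,b\in A_R$ are adjacent in $G_R$, so the $M$-detour collapses to a single edge introducing no vertex of $Z$.

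The crux — and the step I expect to be the main obstacle — is exactly this rerouting claim: that under d-convexity, two vertices of $A_R$ joined in $(G_A)^m$ by a path through $M$ must already be adjacent in $G$. The difficulty is that such a moral path, read back in $G$, may contain colliders (from marriages) that are ancestors of $W$ but \emph{not} of the flanking pair, so it is not literally an inducing path between $a$ and $b$, and Lemma~\ref{thm-3.2} does not apply to $\{a,b\}$ directly; the failure of this claim is precisely what drives the non-collapsible examples. I would resolve it by an extremal argument: take a would-be counterexample pair joined by a shortest such $M$-path and use the first step together with Lemma~\ref{thm-3.2} applied to auxiliary pairs — an endpoint together with the first $R$-descendant of an offending collider — to manufacture a genuine inducing path of $R$, contradicting d-convexity. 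Managing the collider and marriage bookkeeping in this minimal configuration is where essentially all the work lies; the final assembly of the rerouted path and the check that it avoids $Z$ are then routine.
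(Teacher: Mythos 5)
Your (ii)$\Rightarrow$(i) half is correct and is essentially the paper's own argument: an i-pair $\{u,v\}$ gives $u\indep v\mid an_{G_R}(\{u,v\})\,[G_R]$, while Lemma~\ref{thm-3.2}(iv) excludes every separator drawn from $R\setminus\{u,v\}$, so the same triple lies in $I(G_R)$ but not in $I(G)_R$. Using part (iv) rather than part (iii) is in fact slightly cleaner than the paper's version, since it avoids relying on the identity $an_{G_R}(\{x,y\})=an_G(\{x,y\})\setminus M$.

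The gap is in (i)$\Rightarrow$(ii), and it is twofold. First, the rerouting claim you isolate as the crux is false as stated: under d-convexity, two vertices of $A_R$ joined in $(G_A)^m$ through $M$ need \emph{not} be adjacent in $G$. Take $R=\{a,b,w\}$, $M=\{c\}$, with edges $a\to c$, $b\to c$, $c\to w$, $a\to w$, $b\to w$: $R$ is d-convex (the only candidate inducing paths between the non-adjacent pair $\{a,b\}$ all have a collider outside $an_G(\{a,b\})=\emptyset$), yet $a$ and $b$ are joined in $(G_{An_G(W)})^m$ by the marriage over $c\in M$ without being adjacent in $G$. What the argument actually needs is adjacency in the moral graph of the \emph{induced} subgraph, $\bigl((G_R)_{An_{G_R}(W)}\bigr)^m$ --- here supplied by the common child $w\in R$ that d-convexity forces via the directed inducing paths $a\to c\to w$ and $b\to c\to w$ --- and your decomposition into ``maximal subpaths with interior in $M$'' does not even see this case, since a single moral edge generated by an $M$-child has empty interior and so would be ``kept'' although it is not an edge of the target moral graph. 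Second, even with the target corrected, the central lemma is not proved: you explicitly defer the ``collider and marriage bookkeeping'' to an unspecified extremal argument and concede that essentially all the work lies there. The whole content of this direction is precisely that induction --- showing that every $M$-segment and every marriage over a child in $An_G(W)\setminus An_G(\{a,b\})$ (to which Lemma~\ref{thm-3.2}(ii) does not directly apply) can be replaced inside $G_R$ without touching $Z$ --- so identifying the obstacle without resolving it leaves the forward implication unestablished. For what it is worth, the paper's own write-up of this direction is a one-line appeal to Lemma~\ref{thm-3.2} ``by contradiction''; your outline is more honest about where the difficulty sits, but it does not close it.
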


Theorem \ref{thm-3.5} provides an equivalent condition for the d-convex subgraph, which is the basis for finding the localized BN, including relevant variables. For a subset $A\subseteq V$, we say it is a d-convex subset if the induced subgraph $G_A$ is d-convex. The following result demonstrates that d-convex subgraphs have the closure property for the set intersection operation.

\begin{theorem}[Closure property]\label{thm-3.7}
	Let $G=(V, E)$ be a DAG and $\Lambda $ an index set. For all $\lambda \in \Lambda$, if $H_{\lambda}$ is d-convex in $G$, then $\cap_{\lambda \in \Lambda} H_{\lambda}$ is d-convex in $G$.
\end{theorem}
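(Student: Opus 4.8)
The plan is to reduce to two sets and then argue by a minimal counterexample. Since $V$ is finite, the family $\{H_\lambda\}$ has only finitely many distinct members, so $\cap_{\lambda\in\Lambda}H_\lambda$ is actually a finite intersection; by induction it then suffices to prove that if $H_1$ and $H_2$ are d-convex then so is $H:=H_1\cap H_2$. Throughout I would work with the reformulation of d-convexity supplied by Lemma~\ref{thm-3.2}: taking $R=H_i$, the equivalences (i)$\Leftrightarrow$(ii)$\Leftrightarrow$(iii) show that $G_R$ is d-convex (that is, $\mathcal{I}_G(R)=\emptyset$) if and only if, for every pair of non-adjacent $u,v\in R$, no path of the moral graph $\Gamma_{uv}:=(G_{An_G(\{u,v\})})^m$ has all of its intermediate vertices outside $R$; equivalently, $R\cap an_G(\{u,v\})$ separates $u$ from $v$ in $\Gamma_{uv}$. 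This turns the statement into a question about vertex separation in the fixed undirected graphs $\Gamma_{uv}$.

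Assume for contradiction that $H$ is not d-convex. Then there is a non-adjacent pair $u,v\in H$ and a $u$-$v$ path $\rho$ in $\Gamma_{uv}$ whose intermediate vertices all lie in $V\setminus H=(V\setminus H_1)\cup(V\setminus H_2)$; I would choose such a bad configuration with $\rho$ as short as possible, breaking ties by minimising $|An_G(\{u,v\})|$. Because $u,v\in H_1$ and $H_1$ is d-convex, every $u$-$v$ path in $\Gamma_{uv}$ must meet $H_1$ internally, so $\rho$ has an intermediate vertex in $H_1\setminus H_2$; symmetrically it has one in $H_2\setminus H_1$. Listing the vertices of $\rho$ that lie in $H_1$ as $u=p_0,\dots,p_m=v$, the presence of an intermediate vertex of $\rho$ in $H_2\setminus H_1$ forces some segment $\sigma=\rho[p_t,p_{t+1}]$ to have at least one interior vertex, all of whose interior vertices avoid $H_1$ (and hence avoid $H$). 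If $p_t$ and $p_{t+1}$ were adjacent in $G$, replacing $\sigma$ by that single edge would yield a shorter bad path for $(u,v)$, contradicting minimality; so $p_t,p_{t+1}$ are non-adjacent. The goal is then to recognise $\sigma$ as a witness that $\{p_t,p_{t+1}\}\in\mathcal{I}_G(H_1)$, contradicting the d-convexity of $H_1$.

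The main obstacle is precisely this last step: $\sigma$ is a path of $\Gamma_{uv}$, but to witness an i-pair of $H_1$ it must be realised inside $\Gamma_{p_t p_{t+1}}=(G_{An_G(\{p_t,p_{t+1}\})})^m$, and the intermediate vertices of $\sigma$—which are ancestors of $\{u,v\}$—need not be ancestors of $\{p_t,p_{t+1}\}$. Indeed, a moral edge $x\overset{\Gamma}{\sim}y$ on $\sigma$ arises from a common child $c\in an_G(\{u,v\})$ that may be a descendant, rather than an ancestor, of the endpoints of the segment, so a naive passage to the smaller moral graph fails. I would resolve this by expanding each moral edge of $\sigma$ back into its underlying collider in $G$: every $x\overset{\Gamma}{\sim}y$ becomes $x\rightarrow c\leftarrow y$ with $c\in an_G(\{u,v\})$, turning $\sigma$ into a walk in $G$ whose colliders are all ancestors of $\{u,v\}$ and whose non-colliders still avoid $H_1$. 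From this walk I would carve out an inducing path of $H_1$ between $p_t$ and $p_{t+1}$, using the minimality of the configuration (now measured by $|An_G(\{u,v\})|$) to control the ancestry bookkeeping and to replace any descendant detours by routes through the relevant colliders. The heart of the proof, and the step demanding the most care, is exactly this collider-expansion and ancestral-closure argument that converts a separating-set failure for $H_1\cap H_2$ into an honest inducing path for one of $H_1$ or $H_2$; by contrast, the reductions in the first two paragraphs are routine once Lemma~\ref{thm-3.2} is in hand.
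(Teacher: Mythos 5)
Your reductions in the first two paragraphs are sound (finiteness of $V$ makes the intersection effectively finite, induction reduces to two sets, and the paper's own proof likewise treats only $\Lambda=\{1,2\}$), and rephrasing d-convexity via Lemma~\ref{thm-3.2}(ii) in the moral graphs $\Gamma_{uv}=(G_{An_G(\{u,v\})})^m$ is legitimate; the shortening argument showing $p_t,p_{t+1}$ non-adjacent is also fine. The problem is that the entire burden of the proof lands on the one step you yourself flag as hard, and that step is not carried out: you never actually produce an inducing path of $H_1$ between $p_t$ and $p_{t+1}$. The obstacle you identify is real --- a moral edge of $\sigma$ arises from a common child lying in $An_G(\{u,v\})$ but possibly in $de_G(\{p_t,p_{t+1}\})$, so $\sigma$ need not live in $(G_{An_G(\{p_t,p_{t+1}\})})^m$ and need not certify $\{p_t,p_{t+1}\}\in\mathcal{I}_G(H_1)$, since Definition~\ref{def-1} demands $V^c(l)\subseteq an_G(\{p_t,p_{t+1}\})$ --- and the proposed remedy (``collider-expansion and ancestral-closure \ldots using minimality to control the ancestry bookkeeping'') describes what a proof would have to accomplish rather than accomplishing it. In particular, it is never explained how minimising the length of $\rho$ or $|An_G(\{u,v\})|$ forces the expanded colliders into $an_G(\{p_t,p_{t+1}\})$. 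As written, the argument has a genuine gap at its only nontrivial point.

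For contrast, the paper avoids this difficulty by changing the ambient graph rather than the pair of endpoints: by the inheritance property (Lemma~\ref{new-lemma-2}, a consequence of Theorem~\ref{thm-3.5}), $H=H_1\cap H_2$ is d-convex in $G$ if and only if it is d-convex in $G_{H_1}$. An inducing path of $H$ inside $G_{H_1}$ has all its interior vertices in $H_1$, so its interior meets $H_2$ exactly where it meets $H$; the inclusions $H\cap V^o(l_{uv})\subseteq V^c(l_{uv})\subseteq an_G(\{u,v\})$ then transfer verbatim to $H_2$, exhibiting an inducing path of $H_2$ in $G$ for the \emph{same} pair $\{u,v\}$ and contradicting the d-convexity of $H_2$. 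No new vertex pair, and hence no re-anchoring of ancestral sets, is ever needed. If you want to salvage your route, the cleanest fix is to import exactly this idea: keep the pair $\{u,v\}$ fixed and pass to the induced subgraph $G_{H_1}$, instead of trying to extract a new i-pair $\{p_t,p_{t+1}\}$ from a segment of $\rho$.
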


Theorem~\ref{thm-3.7}
guarantees that the minimal d-convex subgraph containing $R$  is uniquely existent. We refer to the unique minimal d-convex subgraph containing $R$ as the  \textbf{d-convex hull} of $R$ in $G$. 

\begin{definition}[Minimum collapsible set]\label{def:mcs}
	Let $\mathcal{B}=(G,\mathcal{P}(G))$ be a BN and $R\subseteq V$. We say that the subset $H$ containing $R$ is the minimum collapsible set if it satisfies
	\begin{itemize}
		\item[(i)] $\mathcal{P}(G)_H=\mathcal{P}(G_H)$; and
		\item[(ii)] $\mathcal{P}(G)_{H'}\neq\mathcal{P}(G_{H'})$ for any subset $H'$ satisfying $R\subseteq H' \subsetneq H$.
	\end{itemize}    
\end{definition}

Given a BN $\mathcal{B} = (G, \mathcal{P}(G))$ and a subset $R$ of interest in probabilistic inference, Definition~\ref{def:mcs} suggests that inferences can be made on the variables in $R$ based on the localized BN $\mathcal{B}_H = (G_H, \mathcal{P}(G_H))$ over the minimum collapsible set $H$ that contains $R$. Essentially, the minimum collapsible set is the smallest set onto which a  BN can be collapsed, to obtain a localized network without affecting the inference results about variables in $R$.

\begin{theorem}\label{thm:m-coll=d-convex}
	Given a BN $\mathcal{B}=(G,\mathcal{P}(G))$ with $G=(V,E)$ and $H\subseteq V$. Under faithfulness, we have that $\mathcal{B}$ is M-collapsible onto $H$ if and only if $G_H$ is d-convex in $G$.    
\end{theorem}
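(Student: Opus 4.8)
The plan is to prove the theorem by stitching together two equivalences already at our disposal, so that no new combinatorial argument about paths or colliders is required. The first ingredient is Theorem~\ref{thm-3.5}, applied with $R=H$, which states that $G_H$ is d-convex in $G$ if and only if the projected independence model agrees with the independence model of the induced subgraph, that is, $I(G)_H=I(G_H)$. The second ingredient is the equivalence of CI-collapsibility and M-collapsibility under faithfulness recalled at the end of Section~\ref{sec-2} (from \cite{xie2009collapsibility}): under the faithfulness hypothesis, $\mathcal{P}(G)_H=\mathcal{P}(G_H)$ holds if and only if $I(G)_H=I(G_H)$.

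First I would unwind the definition of M-collapsibility given in Section~\ref{sec-2}: $\mathcal{B}$ is M-collapsible onto $H$ precisely when $\mathcal{P}(G)_H=\mathcal{P}(G_H)$. Next, invoking the cited equivalence between M- and CI-collapsibility under faithfulness, this marginal-family identity is equivalent to the independence-model identity $I(G)_H=I(G_H)$, i.e.\ to $\mathcal{B}$ being CI-collapsible onto $H$. Finally, applying Theorem~\ref{thm-3.5} with $R=H$ converts $I(G)_H=I(G_H)$ into the statement that $G_H$ is d-convex in $G$. Reading the chain $\mathcal{P}(G)_H=\mathcal{P}(G_H)\Leftrightarrow I(G)_H=I(G_H)\Leftrightarrow G_H\text{ is d-convex}$ in both directions yields the claimed equivalence.

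Because both building blocks are already established, I do not expect a substantive obstacle in the derivation itself; the care required is essentially bookkeeping about where each hypothesis enters. The one point worth verifying is that the hypotheses line up correctly: Theorem~\ref{thm-3.5} is stated for an arbitrary $R\subseteq V$ and imposes no distributional assumption, so it applies verbatim with $R=H$, whereas the CI~$\Leftrightarrow$~M equivalence is exactly the step that consumes the faithfulness assumption of the theorem. I would therefore make explicit that faithfulness is used only in the middle equivalence, while the outer two equivalences are purely graphical, so that the logical role of each hypothesis is transparent.
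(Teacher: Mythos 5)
Your argument is exactly the paper's: the published proof likewise chains Lemma~\ref{lem:xie-geng} (the Xie--Geng equivalence of M- and CI-collapsibility under faithfulness) with Theorem~\ref{thm-3.5} applied to $R=H$. Your additional remark pinpointing that faithfulness enters only in the middle equivalence is a correct and helpful clarification, but the route is the same.
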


\begin{proof}
	This follows easily from Theorem~\ref{thm-3.5} and Lemma~\ref{lem:xie-geng} in Appendix \ref{new_N.0}.
\end{proof}

Theorem~\ref{thm:m-coll=d-convex} implies the following important corollary, which transforms a difficult problem in statistics into a simple one in graph theory.

\begin{corollary}\label{cor-new}
	Given a BN $\mathcal{B}=(G,\mathcal{P}(G))$ with $G=(V,E)$ and $R\subseteq H\subseteq V$. Then, $H$ is the minimum collapsible set containing $R$ if and only if $G_H$ is the d-convex hull of $R$. 
\end{corollary}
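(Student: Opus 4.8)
The plan is to read the corollary as a direct translation, via Theorem~\ref{thm:m-coll=d-convex}, of the statistical minimality condition in Definition~\ref{def:mcs} into the graph-theoretic minimality condition defining the d-convex hull, with Theorem~\ref{thm-3.7} supplying the uniqueness that makes the word ``the'' legitimate on both sides. Throughout I work under the standing faithfulness assumption so that Theorem~\ref{thm:m-coll=d-convex} applies, i.e.\ so that M-collapsibility of $\mathcal{B}$ onto a set is exactly d-convexity of the induced subgraph on that set.

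For the forward implication, suppose $H$ is the minimum collapsible set containing $R$. Condition~(i) of Definition~\ref{def:mcs} says $\mathcal{P}(G)_H=\mathcal{P}(G_H)$, i.e.\ $\mathcal{B}$ is M-collapsible onto $H$, so Theorem~\ref{thm:m-coll=d-convex} gives that $G_H$ is d-convex and contains $R$. Let $C$ denote the d-convex hull of $R$, which exists and is unique by Theorem~\ref{thm-3.7}; by its defining minimality $R\subseteq C\subseteq H$. If $C\subsetneq H$, then $G_C$ is d-convex, so Theorem~\ref{thm:m-coll=d-convex} makes $\mathcal{B}$ M-collapsible onto $C$ with $R\subseteq C\subsetneq H$, contradicting condition~(ii). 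Hence $C=H$ and $G_H$ is the d-convex hull of $R$.

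For the converse, suppose $G_H$ is the d-convex hull of $R$. Then $G_H$ is d-convex and $R\subseteq H$, so Theorem~\ref{thm:m-coll=d-convex} yields $\mathcal{P}(G)_H=\mathcal{P}(G_H)$, establishing Definition~\ref{def:mcs}(i). For~(ii), assume toward a contradiction that some $H'$ with $R\subseteq H'\subsetneq H$ satisfies $\mathcal{P}(G)_{H'}=\mathcal{P}(G_{H'})$; then $\mathcal{B}$ is M-collapsible onto $H'$, so $G_{H'}$ is d-convex by Theorem~\ref{thm:m-coll=d-convex}. But then $H'$ is a d-convex set containing $R$ that is strictly smaller than the hull $H$, contradicting the minimality of the d-convex hull guaranteed by Theorem~\ref{thm-3.7}. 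Thus~(ii) holds and $H$ is the minimum collapsible set containing $R$.

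The argument is essentially bookkeeping once the two cited results are in hand, so I do not expect a genuine obstacle; the one point demanding care is reconciling the two notions of minimality --- the ``no intermediate collapsible subset'' formulation of Definition~\ref{def:mcs}(ii) against the ``contained in every d-convex set'' formulation of the hull. These coincide precisely because the intersection-closure of Theorem~\ref{thm-3.7} forces a minimal d-convex set containing $R$ to be the minimum one, and this is exactly the place where I would be most explicit in the write-up.
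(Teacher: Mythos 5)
Your proof is correct and follows exactly the route the paper intends: the paper offers no separate proof of this corollary, presenting it as an immediate consequence of Theorem~\ref{thm:m-coll=d-convex} together with the uniqueness of the d-convex hull from Theorem~\ref{thm-3.7}, which is precisely the translation you carry out in both directions. Your explicit reconciliation of the ``no strictly smaller collapsible subset'' minimality in Definition~\ref{def:mcs}(ii) with the intersection-based minimality of the hull is the one detail the paper leaves implicit, and you handle it correctly.
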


The parameterization of the d-convex hull can be approached in two ways. The first method involves inference based on a known global CPT. When the parent node of a variable $u$ changes, $u$ is considered as the target node, and its parent node(s) in the d-convex hull (which may be empty) serve as conditions. The conditional probability distribution of $u$ is then recalculated using standard inference algorithms. Although the computation of the updated conditional distribution can be complex in certain cases, this method significantly accelerates subsequent inference processes. Therefore, caching frequently used local CPTs can improve inference efficiency. The second method involves reparameterizing the d-convex hull using corresponding local data and performing the inference, with the correctness of this approach stemming from Corollary \ref{cor-new}. This method does not require precomputing or storing the original model's CPT, nor does it require the recalculation of certain variables' CPTs. As noted by Xie and Geng \cite{xie2009collapsibility}, the second parameterization method ensures that the values of the two maximum likelihood estimates are asymptotically equivalent. Therefore, when only a few variables are of interest, preserving a smaller convex set still allows for efficient inference, making this method both more convenient and cost-effective. In summary, when the model's dimensionality is moderate and the CPT is known or precise inference is required, the first method is recommended. Otherwise, the second method is suggested.

\section{An efficient algorithm for d-convex hulls}\label{sec-4}

To introduce our proposed algorithm, we first explore the relationship between the d-convex subgraph and the minimal d-separator. For two non-adjacent vertices $u$ and $v$, a $uv$-d-separator $S$ is considered to be minimal if no proper subset of $S$ d-separates $u$ from $v$ in $G$. In what follows, we establish the theoretical foundations for the proposed algorithm. 

\begin{theorem}\label{thm-4.3}
	Let $G=(V,E)$ be a DAG and $H\subseteq V$. The subgraph $G_H$ is d-convex if and only if for any pair of non-adjacent vertices $u,v\in H$, all minimal $uv$-d-separators $S$ are contained in $H$.
\end{theorem}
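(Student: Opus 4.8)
The plan is to translate d-convexity into a statement about d-separators via Lemma~\ref{thm-3.2} and then analyse minimal separators in the ancestral moral graph. By Definition~\ref{def-d-convexity} and Definition~\ref{def-1}, $G_H$ is d-convex exactly when $\mathcal{I}_G(H)=\emptyset$, i.e. no non-adjacent pair in $H$ is an i-pair. Applying the equivalence (i)$\Leftrightarrow$(iv) of Lemma~\ref{thm-3.2} with $R=H$ and $M=V\setminus H$, this is in turn equivalent to the assertion
\[
(\star)\qquad \text{every non-adjacent } u,v\in H \text{ is d-separated in } G \text{ by some } Z\subseteq H\setminus\{u,v\}.
\]
So it suffices to prove that $(\star)$ holds if and only if, for every non-adjacent $u,v\in H$, all minimal $uv$-d-separators are contained in $H$.

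The direction assuming full containment is immediate: for any non-adjacent $u,v\in H$ choose any minimal $uv$-d-separator $S$; since $S\cap\{u,v\}=\emptyset$ and $S\subseteq H$, the set $Z=S$ witnesses $(\star)$. For the converse I will argue the contrapositive, so suppose some non-adjacent $u,v\in H$ admits a minimal $uv$-d-separator $S$ with $S\not\subseteq H$, and fix $s\in S\cap M$. I will rely on the classical fact (to be recorded as a lemma feeding the CMDSA construction) that the minimal $uv$-d-separators of $G$ are precisely the minimal $u$-$v$ vertex separators of the ancestral moral graph $M_{uv}:=(G_{An_G(\{u,v\})})^m$; in particular $S\subseteq an_G(\{u,v\})$ and $s\in an_G(\{u,v\})$.

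The core step is to manufacture, from the misplaced vertex $s$, a genuine i-pair of $H$, which by $(\star)$ contradicts d-convexity. Minimality of $S$ yields a $u$-$v$ path $\pi$ in $M_{uv}$ meeting $S$ only at $s$; equivalently, $s$ has a neighbour in the $u$-component and one in the $v$-component of $M_{uv}\setminus S$. If the pair $\{u,v\}$ is itself an i-pair we are done; otherwise, by $(\star)$ it also admits a minimal d-separator inside $H$, so $s$ is a ``bypass through $M$'' that some $H$-separator routes around. I would then locate two vertices $x,y\in H$ on the two sides of this bypass and exhibit a path between them in $(G_{An_G(\{x,y\})})^m$ whose only interior vertices lie in $M$; unfolding it in $G$ gives an inducing path, so $\{x,y\}\in\mathcal{I}_G(H)$. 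A short exchange argument forces $x,y$ to be non-adjacent: were $x\overset{G}\sim y$, splicing this edge into $\pi$ in place of the $M$-bypass would produce a $u$-$v$ path avoiding $S$, contradicting that $S$ separates $u,v$.

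The main obstacle is making this construction respect the collider/ancestor condition of Definition~\ref{def-1}, namely $V^c(l_{xy})\subseteq an_G(\{x,y\})$, once the ancestral set contracts from $An_G(\{u,v\})$ to $An_G(\{x,y\})$. Choosing $x,y$ naively as the first $H$-vertices flanking $s$ along $\pi$ can fail: the moral edges adjacent to $s$ may arise from moralization, and the corresponding collider witnesses, although ancestors of $\{u,v\}$, need not be ancestors of the flanking $H$-vertices, so the putative path need not survive in the smaller moral graph. The remedy I anticipate is to descend: instead of the path's $H$-boundary, take $x,y$ to be suitable $H$-children or descendants of $s$ (or of the relevant collider witnesses), chosen so that $s$ and every collider on the resulting path are ancestors of $\{x,y\}$; this is precisely the configuration produced by the closest-separator bookkeeping underlying the CMDSA algorithm. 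Formalising this choice, most cleanly by taking a counterexample minimizing $|An_G(\{u,v\})|$ and showing that in a minimal counterexample the bypass reduces to direct edges into $H$-children of $s$, is where the real work lies; everything else is routine given Lemma~\ref{thm-3.2} and the moral-graph characterization of minimal d-separators.
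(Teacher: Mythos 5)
Your reduction of d-convexity to the condition $(\star)$ via Lemma~\ref{thm-3.2}(i)$\Leftrightarrow$(iv) is correct, and the direction ``all minimal separators in $H$ $\Rightarrow$ $(\star)$'' is handled exactly as in the paper (a minimal separator is itself the required witness $Z\subseteq H\setminus\{u,v\}$). The gap is in the converse direction. Your plan is to take a minimal $uv$-d-separator $S$ with a vertex $s\in S\setminus H$ and manufacture an i-pair $\{x,y\}$ of $H$ from the resulting ``bypass'' in the ancestral moral graph. But that construction is the entire content of this direction, and you explicitly leave it unformalised (``where the real work lies''). The difficulty you flag is real and not a technicality: the witnessing i-pair is in general \emph{not} $\{u,v\}$ (e.g.\ in $u\to a\to b\to v$ with $H=\{u,a,v\}$, the pair $\{u,v\}$ is separated by $\{a\}\subseteq H$, the offending minimal separator is $\{b\}$, and the actual i-pair is $\{a,v\}$), so one must choose $x,y$ so that $s$ and every collider on the unfolded path land in $an_G(\{x,y\})$ after the ancestral set shrinks from $An_G(\{u,v\})$ to $An_G(\{x,y\})$. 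As written, the proposal does not prove this direction.

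The paper avoids the construction entirely, and you have already assembled the tools to do the same. By Theorem~\ref{thm-3.5}, d-convexity of $G_H$ gives $I(G)_H=I(G_H)$. Given a minimal $uv$-d-separator $S$, one first checks $u\indep v\mid S\cap H\,[G_H]$: any d-connecting path in $G_H$ given $S\cap H$ has all its vertices in $H$, so its non-colliders avoid $S$ (not just $S\cap H$) and its colliders lie in $An_{G_H}(S\cap H)\subseteq An_G(S)$, hence the same path would d-connect $u$ and $v$ given $S$ in $G$, contradicting $u\indep v\mid S\,[G]$. Then $I(G_H)=I(G)_H$ transfers this to $u\indep v\mid S\cap H\,[G]$, and minimality of $S$ forces $S\cap H=S$, i.e.\ $S\subseteq H$. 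Note that your move to $(\star)$ via Lemma~\ref{thm-3.2}(iv) discards precisely the model-level equality that makes this argument a three-line shrinking step; I would restructure the hard direction around Theorem~\ref{thm-3.5} rather than attempt the path surgery.
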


Theorem~\ref{thm-4.3} states that one can find d-convex hulls by absorbing minimal d-separators for i-pairs. Here, we present a readily identifiable minimal d-separator, which helps the proposed algorithm identify the d-convex hull more efficiently.

\begin{definition}[Close minimal d-separator]\label{def:close-d-separator}
	For any non-adjacent vertices $u,v$ in a DAG $G=(V,E)$, let $S\subseteq V$ be a minimal $uv$-d-separator. We say that $S$ is close to $u$ if $S\subseteq mb_G(u)$.
\end{definition}

\begin{theorem}\label{theo-1}
	Let $G=(V, E)$ be a DAG and $u,v\subseteq V$ two non-adjacent vertices. Then $mb_{G_{An_G(\{u,v\})}}(u)$ contains the unique minimal $uv$-d-separator that is close to $u$.
\end{theorem}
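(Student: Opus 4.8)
The plan is to translate the entire statement into a separation problem in a single undirected graph and then invoke the classical theory of closest minimal separators. Write $A=An_G(\{u,v\})$ and let $H=(G_A)^m$ be the moral graph of the induced subgraph on $A$. First I would record three facts that make the moral graph the right arena. (a) By the definition of the moral graph, the neighbourhood $N_H(u)$ of $u$ in $H$ is exactly $(pa_{G_A}(u)\cup ch_{G_A}(u)\cup pa_{G_A}(ch_{G_A}(u)))\setminus\{u\}$, i.e.\ $N_H(u)=mb_{G_A}(u)$, so the set named in the statement is simply the moral neighbourhood of $u$. (b) Since $A$ is ancestrally closed, for every $Z\subseteq A\setminus\{u,v\}$ we have $An_G(\{u,v\}\cup Z)=A$, so by the moralization reformulation of d-separation already used in Lemma~\ref{thm-3.2} the relation $u\indep v\mid Z\,[G]$ holds if and only if $Z$ separates $u$ from $v$ in $H$; consequently a subset $S\subseteq A$ is a (minimal) $uv$-d-separator in $G$ iff it is a (minimal) $uv$-separator of $H$, minimality transferring because every proper subset of $S$ also lies in $A$. (c) A minimal $uv$-d-separator is always contained in $A$, and $mb_{G_A}(u)\subseteq mb_G(u)$, so any separator found inside $N_H(u)$ is automatically close to $u$ in the sense of Definition~\ref{def:close-d-separator}. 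I would also note that $v\notin N_H(u)$: if $u,v$ were moralized through a common child in $A$, that child would be a common descendant of $u$ and $v$ lying in $An_G(\{u,v\})$, forcing a directed cycle.

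With this reduction the problem becomes the purely undirected assertion that the moral neighbourhood $N_H(u)$ contains a unique minimal $uv$-separator. Deleting $N_H(u)$ isolates $u$, so $N_H(u)$ is itself a $uv$-separator of $H$, hence a $uv$-d-separator of $G$ by (b). To pin down the minimal one inside it, let $C_v$ be the connected component of $v$ in $H-N_H(u)$ and set $S^\ast=N_H(u)\cap N_H(C_v)$, the neighbours of $u$ that also touch $v$'s side. I would verify that $S^\ast$ separates: the external neighbours of the component $C_v$ are precisely the vertices of $N_H(u)$ adjacent to it, namely $S^\ast$, so in $H-S^\ast$ the set $C_v$ (which contains $v$ but not $u$) is cut off from $u$. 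Minimality is immediate, since each $w\in S^\ast$ is adjacent both to $u$ and to $C_v$, so in $H-(S^\ast\setminus\{w\})$ the walk $u-w-C_v-\cdots-v$ reconnects the two sides.

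Uniqueness is the heart of the argument. Let $S\subseteq N_H(u)$ be any minimal $uv$-separator of $H$. Because $S\subseteq N_H(u)$ and $C_v\cap N_H(u)=\emptyset$, we have $C_v\cap S=\emptyset$, so $C_v$ survives intact in $H-S$; hence for $w\in S^\ast$, if $w\notin S$ the path $u-w-C_v-\cdots-v$ would persist in $H-S$, contradicting that $S$ separates. Thus $S^\ast\subseteq S$, and since $S^\ast$ already separates while $S$ is minimal, $S=S^\ast$. Translating back through (b) and (c), $S^\ast$ is the unique minimal $uv$-d-separator contained in $mb_{G_A}(u)$ and it is close to $u$, which is the assertion of the theorem.

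I expect the main obstacle to be the bookkeeping around the moral graph rather than any single deep step: one must be careful that the ancestral closure $A$ is exactly the vertex set on which the moralization criterion certifies the equivalence between d-separation in $G$ and separation in $H$, and that minimality transfers in both directions. The other delicate point is fact (c): one should confirm $mb_{G_A}(u)\subseteq mb_G(u)$ so that membership in the ancestral Markov boundary guarantees closeness in the original graph, while noting that $mb_G(u)\cap A$ may be strictly larger than $mb_{G_A}(u)$, so that it is the identification $N_H(u)=mb_{G_A}(u)$, and not $mb_G(u)\cap A$, that makes the uniqueness argument go through.
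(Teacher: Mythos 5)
Your proof is correct and follows essentially the same route as the paper's: both identify the component of $v$ obtained after deleting $u$'s Markov boundary in the ancestral subgraph (you phrase this as a connected component of $(G_{An_G(\{u,v\})})^m - N_H(u)$, the paper as a d-connected component w.r.t.\ $mb_{G_{An_G(\{u,v\})}}(u)$, which coincide under the moralization correspondence) and take that component's boundary as the separator. Your write-up is in fact more complete, since it supplies the containment argument ($S^\ast\subseteq S$ for every minimal separator $S\subseteq N_H(u)$) that the paper merely asserts, and it correctly flags that the uniqueness holds relative to $mb_{G_{An_G(\{u,v\})}}(u)$ rather than $mb_G(u)$.
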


Based on Theorem~\ref{theo-1}, an algorithm is proposed to identify the close minimal d-separator between two non-adjacent vertices in 
$G$; refer to Algorithm \ref{alg-1} in Appendix \ref{new_N.0}. Using the FCMDS algorithm, we can devise an algorithm for identifying the d-convex hull containing the variable of interest by absorbing close minimal d-separators. We call the proposed algorithm the Close Minimal D-separator Absorption Algorithm, abbreviated as CMDSA; see Algorithm~\ref{alg-2}.

\begin{algorithm}
	\caption{Close Minimal D-separator Absorption Algorithm (CMDSA)}
	\label{alg-2}
	\begin{algorithmic}
		\STATE {\bfseries Input:} A DAG $G=(V,E)$  and a subset $R\subseteq V$.
		\STATE {\bfseries Output:}  The d-convex hull $G_H$ containing $R$.
		\STATE Initialize $H:=R,\mathcal{Q}: = \emptyset$
		\REPEAT
		\FOR{each d-connected component $G_M$ in $G_{An_G(R)}$ w.r.t. $H$}
		\STATE Update $\mathcal{Q}$ to $\mathcal{I}_{G_{M\cup H}}(H)$
		\FOR{all vertex pairs $\{u,v\}\in \mathcal{Q}$}
		\STATE Call FCMDS to obtain close minimal $uv$-d-separators $S_u$ and $S_v$ in $G$, which are close to $u$ and $v$, respectively
		\STATE Update $H$ to $H\cup S_u\cup S_v$
		\ENDFOR
		\ENDFOR
		\UNTIL{$H$ contains no i-pairs}
		\STATE {\bfseries Return:} The induced subgraph $G_H$ of $G$ on $H$
	\end{algorithmic}
\end{algorithm}

During the implementation of CMDSA, the first step involves reducing the search space to the ancestor closure set $An_G(R)$ for a subset $R\subseteq V$, as $An_G(R)$ is a d-convex subset containing $R$. If $R$ is not d-convex, then the set of i-pairs $\mathcal{I}_G(R)\neq\emptyset$. For any $\{u,v\}\in \mathcal{I}_G(R)$,
Lemma \ref{new-lemma-1} (see Appendix \ref{new_N.0}) ensures that there exists a d-connected component $G_M$  within $G_{An_G(R)}$ such that $\{u,v\}\in \mathcal{I}_{G_{M\cup R}}(R)$. This step further narrows the search space for the i-pairs.

According to Theorem \ref{thm-4.3}, the d-convex hull 
$H$ containing $R$ must include all minimal $uv$-d-separators. To implement this, we incorporate the close minimal $uv$-d-separators into $R$, ensuring that 
$\{u,v\}$ is no longer an i-pair in the updated set. It is important to note that in CMDSA, as minimal d-separators are added to $H$, new i-pairs may be generated. Therefore, to ensure that all necessary i-pairs are destroyed, the algorithm requires continuous iterations until the updated d-convex hull $H$ no longer contains any i-pairs.

\begin{theorem}\label{thm-4.33}
	The induced subgraph $G_H\subseteq G$ obtained by CMDSA is the d-convex hull containing $R$.
\end{theorem}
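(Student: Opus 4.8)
The plan is to establish the two inclusions that characterize the d-convex hull: that the output set $H$ is itself a d-convex set containing $R$ (so the hull $\subseteq H$), and that $H$ is contained in every d-convex set containing $R$ (so $H\subseteq$ hull). First, though, I would settle termination. Every vertex ever added to $H$ comes from a close minimal $uv$-d-separator $S_u$ or $S_v$ with $u,v\in H$, and by Theorem~\ref{theo-1} such a separator lies in $mb_{G_{An_G(\{u,v\})}}(u)\subseteq An_G(\{u,v\})\subseteq An_G(R)$. Hence $H$ is monotonically nondecreasing and bounded by the finite set $An_G(R)$, so after finitely many iterations it stops growing and the outer loop exits. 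As a byproduct this shows $An_G(H)=An_G(R)$ throughout, so the search space $G_{An_G(R)}$ used by the algorithm coincides with $G_{An_G(H)}$.

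For the ``$H$ is d-convex'' direction I would use the exit condition directly: the loop terminates precisely when $H$ contains no i-pairs, which by Definitions~\ref{def-1} and~\ref{def-d-convexity} means there is no inducing path of $H$ in $G$, i.e.\ $G_H$ is d-convex, while $R\subseteq H$ holds by construction. The point requiring care is that the test run inside the loop is \emph{complete}: checking $\mathcal{I}_{G_{M\cup H}}(H)=\emptyset$ over all d-connected components $G_M$ of $G_{An_G(R)}$ w.r.t.\ $H$ must detect every global i-pair. This is exactly Lemma~\ref{new-lemma-1} applied with $H$ in place of $R$ (legitimate since $An_G(H)=An_G(R)$): each $\{u,v\}\in\mathcal{I}_G(H)$ appears as an i-pair of $H$ in some $G_{M\cup H}$. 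Conversely I would check \emph{soundness} — that a locally found pair is a genuine i-pair of $H$ in $G$ — by observing that an inducing path inside the induced subgraph $G_{M\cup H}$ is still a path of $G$ with the same collider set $V^c(l_{uv})$, that the constraint $V^c(l_{uv})\subseteq an_{G_{M\cup H}}(\{u,v\})\subseteq an_G(\{u,v\})$ is preserved, and that non-adjacency in an induced subgraph forces non-adjacency in $G$. Soundness is what makes Theorems~\ref{thm-4.3} and~\ref{theo-1} applicable to the pairs the algorithm actually processes.

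For minimality I would maintain, by induction on the additions, the invariant $H\subseteq C$ for every d-convex set $C$ with $R\subseteq C$. Initially $H=R\subseteq C$. When the algorithm absorbs $S_u\cup S_v$ for a processed i-pair $\{u,v\}$, the vertices $u,v$ lie in $H\subseteq C$ and are non-adjacent, while $S_u,S_v$ are minimal $uv$-d-separators in $G$ by Theorem~\ref{theo-1}; since $C$ is d-convex, Theorem~\ref{thm-4.3} forces all minimal $uv$-d-separators, in particular $S_u$ and $S_v$, into $C$, so the updated $H$ still satisfies $H\subseteq C$. Taking $C$ to be the d-convex hull of $R$, which exists and is the minimal such set by Theorem~\ref{thm-3.7}, gives $H\subseteq\text{hull}$; combined with the previous paragraph, where $H$ is a d-convex set containing $R$ and hence contains the hull, I would conclude that $G_H$ is exactly the d-convex hull of $R$.

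The hardest part, I expect, is the bookkeeping that bridges the local and global views of i-pairs: making precise that the per-component test is simultaneously sound (the separators absorbed are bona fide minimal $uv$-d-separators to which Theorems~\ref{thm-4.3}--\ref{theo-1} apply) and complete (an empty $\mathcal{Q}$ across all components genuinely certifies d-convexity). A secondary subtlety I would spell out is that absorbing a $uv$-d-separator actually destroys the pair $\{u,v\}$: by Lemma~\ref{thm-3.2}(iv), once a $uv$-d-separator sits in $H\setminus\{u,v\}$ the pair can no longer be an i-pair of $H$. This, together with the $An_G(R)$ bound, is what turns the informal ``iterate until no i-pairs remain'' into a rigorous termination-with-correct-output argument.
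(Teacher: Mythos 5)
Your proposal is correct and follows essentially the same route as the paper's own proof: termination plus the exit condition give that $G_H$ is d-convex and contains $R$, while Theorem~\ref{thm-4.3} gives $H\subseteq H'$ for the d-convex hull $H'$, whence $G_H=G_{H'}$. You supply several details the paper leaves implicit (the $An_G(R)$ bound for termination, the soundness and completeness of the per-component i-pair test via Lemma~\ref{new-lemma-1}, and the inductive invariant for minimality), but the underlying argument is the same.
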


\begin{proposition}\label{prop-5.8}
	The algorithm CMDSA has the complexity at most $O(|V|(|V|+|E|))$.
\end{proposition}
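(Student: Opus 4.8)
The plan is to bound the total running time as the product of the number of passes of the outer \emph{repeat} loop and the cost of a single pass, so my first task is to control how many passes occur. I would show that $H$ grows monotonically and that every pass which does not trigger termination adds at least one new vertex to $H$. Indeed, if the loop does not stop, then at the start of the pass $H$ still carries an i-pair $\{u,v\}$, and by the characterization in Lemma~\ref{thm-3.2}(iv) no subset of $H\setminus\{u,v\}$ d-separates $u$ and $v$. Since the close minimal separators $S_u,S_v$ returned by FCMDS are genuine $uv$-d-separators (Theorem~\ref{theo-1}), neither can lie inside $H$; hence the update $H\leftarrow H\cup S_u\cup S_v$ strictly enlarges $H$. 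Because $R\subseteq H\subseteq V$ is maintained throughout, there are at most $|V|-|R|$ enlarging passes together with one final pass whose i-pair test fails, giving $O(|V|)$ passes in total.

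Next I would argue that a single pass costs $O(|V|+|E|)$. The set $An_G(R)$ is fixed and can be precomputed once by a single traversal in $O(|V|+|E|)$, so it is not recomputed inside the loop. Within a pass, the d-connected components of $G_{An_G(R)}$ w.r.t.\ the current $H$ coincide with the connected components of the moral graph of $G_{An_G(R)}$ after deleting $H$ (using the remark that a d-connected component is connected in the moral graph), which a single moralization followed by a connected-components scan produces. Detecting the i-pairs contributed by a component and locating their close minimal d-separators is then handled by Lemma~\ref{thm-3.2} and Theorem~\ref{theo-1}: membership in $\mathcal{I}_{G_{M\cup H}}(H)$ reduces to a reachability/adjacency test in that same moral graph, and the close separator is read off from the relevant Markov boundary, so the component-finding, i-pair detection, and separator extraction can all be folded into a bounded number of traversals of the ancestral moral graph, each $O(|V|+|E|)$.

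The step I expect to be the main obstacle is bounding the aggregate cost of the two nested inner \emph{for} loops within one pass. A priori there may be $\Theta(|V|^2)$ i-pairs in $\mathcal{Q}$, and issuing a separate FCMDS call for each would cost $\Theta\bigl(|V|^2(|V|+|E|)\bigr)$ per pass, far beyond the stated budget. To avoid this I would use an amortized/charging argument that processes each d-connected component only once: since the components partition a subset of the vertices and the close separator of a component's $H$-boundary can be recovered in a single sweep rather than pair by pair, the total work on components and separators in one pass telescopes to $O(|V|+|E|)$. A secondary subtlety is that a moral graph can carry more than $|E|$ edges; I would handle this by restricting attention to the moral graph of the ancestral set used internally and bounding its construction within the stated budget. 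Combining the $O(|V|)$ outer passes with the $O(|V|+|E|)$ cost per pass then yields $O(|V|(|V|+|E|))$, completing the argument.
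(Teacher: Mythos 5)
Your proof is correct in substance but organizes the accounting differently from the paper. The paper charges the running time per FCMDS call: by Proposition~\ref{prop-5.6} each call costs $O(|V|+|E|)$, and it asserts that at most $|V|-2$ minimal d-separators are ever absorbed, giving the product bound directly. You instead charge per pass of the outer repeat loop, bounding the number of passes by $|V|-|R|$ and then trying to compress all work inside one pass to $O(|V|+|E|)$. Your justification for the pass count --- that a close minimal separator of a live i-pair cannot be contained in $H\setminus\{u,v\}$ because of Lemma~\ref{thm-3.2}(iv), so each productive update strictly enlarges $H$ --- is exactly the argument the paper leaves implicit behind its ``at most $|V|-2$ absorptions'' claim, so on that point you have supplied a detail the paper omits. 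Where your route is weaker is the per-pass bound: Algorithm~\ref{alg-2} as written issues one FCMDS call per pair in $\mathcal{Q}$, and $|\mathcal{Q}|$ can be quadratic in $|V|$, so folding all of those calls into ``a bounded number of traversals'' is really a claim about a modified, batched implementation rather than about the stated pseudocode; you flag this obstacle honestly but only sketch the amortization. The paper's per-call accounting avoids any such amortization, at the price of silently assuming that FCMDS is invoked only on pairs that are still i-pairs at call time (so that every call is productive), and neither proof accounts for the cost of computing $\mathcal{Q}=\mathcal{I}_{G_{M\cup H}}(H)$ itself. Once these implementation conventions are fixed, either accounting yields $O(|V|(|V|+|E|))$.
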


To illustrate Algorithm~\ref{alg-2}, here we give an example.

\begin{example}
	Let $\mathcal{B}=(G,\mathcal{P})$ be the BN with its network structure shown in Fig.\ref{fig-new}(A). Assuming we are interested in the posterior probability $P(h|d)$, then $R = \{d,h\}$ is the set of variables of interest. We use CMDSA to find the d-convex hull containing the variable set $R$.
	
	In the first cycle, CMDSA initializes $H$ as $R$. Notably, $G_{An_G(R)}$ has only one connected component $G_{V\backslash \{d,h\}}$ w.r.t. $H$, and we have $\mathcal{Q}=\{\{d,h\}\}$. In the subgraph $G_{An_G(\{d,h\})}$, we observe that the close minimal $dh$-d-separator is $\{c\}$. Thus, we update $H$ to $H \cup \{c\} = \{c,d,h\}$.
	
	In the second cycle, we find $\mathcal{Q}=\emptyset$. The algorithm finally outputs the induced subgraph $G_H$, as shown in Fig.\ref{fig-new}(B). The local model established by Darwiche \cite{darwiche2009} is Fig.\ref{fig-new}(A) itself, meaning that it cannot simplify any nodes in this case. The local model created by Lin and  Druzdzel \cite{lin1997} is depicted in Fig.\ref{fig-new}(C).

	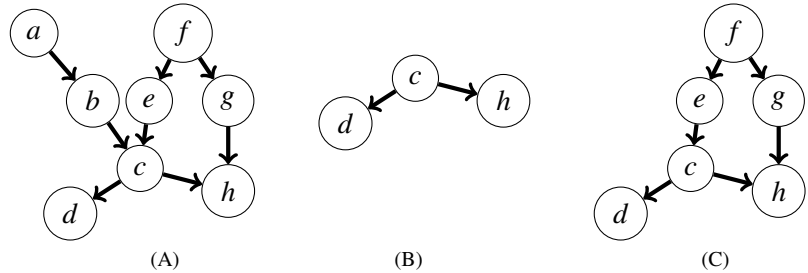
\begin{figure*}[t]
		\begin{center}
			\begin{tikzpicture}[scale=0.6]
				\node[shape=circle, draw=black] (1) at (-1.8,-1.5){$a$};
				\node[shape=circle, draw=black] (2) at (-0.5,-3){$b$};
				\node[shape=circle, draw=black] (3) at (1.5,-1.5){$f$};
				\node[shape=circle, draw=black] (4) at (0.75,-3){$e$};
				\node[shape=circle, draw=black] (5) at (2.5,-3){$g$};
				\node[shape=circle, draw=black] (6) at (0.55,-4.5){$c$};
				\node[shape=circle, draw=black] (7) at (2.5,-5){$h$};
				\node[shape=circle, draw=black] (8) at (-1,-5.5){$d$};
				\draw[->,ultra thick] (1)--(2);
				\draw[->,ultra thick] (2)--(6);
				\draw[->,ultra thick] (3)--(4);
				\draw[->,ultra thick] (3)--(5);
				\draw[->,ultra thick] (4)--(6);
				\draw[->,ultra thick] (6)--(8);	
				\draw[->,ultra thick] (6)--(7);	
				\draw[->,ultra thick] (5)--(7);	
				\node at (1.1,-6.55){\footnotesize (A)};
			\end{tikzpicture}
			\hspace{0.65cm}
			\begin{tikzpicture}[scale=0.6]
				\node[shape=circle, draw=black] (6) at (0.55,-2.5){$c$};
				\node[shape=circle, draw=black] (7) at (2.5,-3){$h$};
				\node[shape=circle, draw=black] (8) at (-1,-3.5){$d$};
				\draw[->,ultra thick] (6)--(7);
				\draw[->,ultra thick] (6)--(8);	
				\node at (0.4,-6.55){\footnotesize (B)};
			\end{tikzpicture}
			\hspace{0.65cm}
			\begin{tikzpicture}[scale=0.6]
				\node[shape=circle, draw=black] (3) at (1.5,-1.5){$f$};
				\node[shape=circle, draw=black] (4) at (0.75,-3){$e$};
				\node[shape=circle, draw=black] (5) at (2.5,-3){$g$};
				\node[shape=circle, draw=black] (6) at (0.55,-4.5){$c$};
				\node[shape=circle, draw=black] (7) at (2.5,-5){$h$};
				\node[shape=circle, draw=black] (8) at (-1,-5.5){$d$};
				\draw[->,ultra thick] (3)--(4);
				\draw[->,ultra thick] (3)--(5);
				\draw[->,ultra thick] (4)--(6);
				\draw[->,ultra thick] (6)--(8);	
				\draw[->,ultra thick] (6)--(7);	
				\draw[->,ultra thick] (5)--(7);	
				\node at (1.1,-6.55){\footnotesize (C)};
			\end{tikzpicture}
			\caption{(A) The Asia network $G$; (B) The d-convex hull containing the variable set $R=\{d,h\}$;  (C) Lin and  Druzdzel's method.}\label{fig-new}
			\end{center}
			\vskip -0.2in
		\end{figure*}
\end{example}

In Fig.~\ref{fig-new}(B), the algorithm needs to compute the prior distribution of variable $c$ and the conditional distribution $P(h|c)$, and based on this, construct the minimal collapsible model for inference. In contrast, in Fig.~\ref{fig-new}(C), the algorithm is required to recompute the conditional distribution $P(c|e)$. It is noteworthy that the d-convex hull model is the minimal local model, as removing any further variables would violate conditional independence.

\section{Experimental studies}\label{sec-5} 

This section details the dimension reduction capability of the proposed method and evaluates the accuracy and efficiency of probabilistic inference based on structural dimension reduction. The experiments were conducted on a 64-bit Windows operating system with a base frequency of 3.2 GHz, equipped with 16 GB of RAM. The implementation was done using the Python programming language.

\subsection{Dimension reduction capability}\label{sec-5.1}

We define the structural dimensionality reduction capability as $$\mathrm{DRC}(R)=1-\frac{|\mathrm{CH_G(R)}|}{n},$$ where $\mathrm{CH_G(R)}$ is the d-convex hull of $R$ in $G$. This definition reflects the extent to which irrelevant variables can be collapsed during the process of probabilistic inference. Experiments were conducted on two types of DAGs: one type was randomly generated, and the other type was sourced from the benchmark DAGs in the Bnlearn library (\href{https://www.bnlearn.com/bnrepository/}{www.bnlearn.com/bnrepository}).

The simulation process is described as follows:

\begin{itemize}
	\item For network sizes of $n=$200, 400, 600, and 800, with an edge connection probability of 0.01, 50 networks were generated for each size.
	\item Within each network, 20 sets of vertices of interest, $R$, were randomly selected, where the size of $R$ ranged from 2 to 8. The proposed CMDSA was then applied to identify d-convex hulls containing $R$.
	\item The average dimension reduction capability of the CMDSA was computed for networks of each scale.
\end{itemize}

\begin{center}
\begin{table*}[!t]
	\caption{DRC on randomly generated DAGs}
	\label{tab-2}
	\begin{center}	
		\begin{tabular*}{\textwidth}{@{\extracolsep\fill}ccccc@{}}
			\hline
			\textbf{n}  & 200     & 400     & 600     & 800     \\ \hline
			\textbf{DRC} & 97.58\% & 73.60\% & 53.46\% & 36.70\% \\ \hline
		\end{tabular*}%
	\end{center}
\end{table*}
\end{center}

\begin{center}
\begin{table*}[!t]
	\caption{DRC on real benchmark DAGs}
	\label{tab-3}
	\begin{center}
	\begin{tabular*}{\textwidth}{@{\extracolsep\fill}cccccccc@{}}
		\hline
		\textbf{Network} & \textbf{V}   & \textbf{|E|} & \textbf{DRC} & \textbf{Network} & \textbf{V}  & \textbf{|E|} & \textbf{DRC} \\ \hline
		Alarm & 37   & 46  & 62.70\% & Diabetes  & 413  & 602  & 55.39\%\\ 
		Hepar2 & 70   & 123  & 75.14\% & Link   & 724  & 1125 & 95.43\%\\ 
		Andes  & 223  & 338  & 70.91\% & Munin2 & 1003 & 1124 & 98.66\%\\ \hline
	\end{tabular*}%
\end{center}
\end{table*}
\end{center}

The simulation results are shown in Table \ref{tab-2} and Table \ref{tab-3}, from which we observe the following:

\begin{itemize}
	\item The DRC of CMDSA gradually decreases as network size increases in the case where DAGs are randomly generated. This is because, under a given edge probability, as the number of network vertices increases, the network becomes more complex, leading to larger d-convex hull sizes that include $R$.
	\item The DRC of CMDSA performs excellently in the case where DAGs are from practical scenarios. This is because real DAGs are typically sparse and exhibit inherent logical relationships among variables, resulting in significantly better dimensionality reduction compared to random graphs.
\end{itemize}

\subsection{Efficiency and  accuracy evaluation}

We compare the efficiency and accuracy of probabilistic inference based on the proposed structural dimension reduction method with well-known probabilistic inference techniques, such as variable elimination (VE), lazy propagation (LP) \cite{Madsen1999}, and improved lazy propagation (improved LP), the latter of which optimizes computational efficiency through relevance reasoning and incremental updates. The experimental procedure is described as follows:

\begin{itemize}
	\item For each Bayesian network Alarm, Andes, Pigs, and Link, we randomly selected 100 sets of variables of interest $Q$ ($|Q| = 2$).
	
	\item 
	The CMDSA algorithm is used to find the d-convex hull $H$ that contains $Q$ and cache the conditional probability tables to construct the local model \(\mathcal{B}_H\), where \(\mathcal{B}\) represents the global model.

	\item We use the VE, LP, and improved LP algorithms to compute the marginal and conditional probabilities of Q based on the global model \(\mathcal{B}\). The conditional probability involves instantiating a variable in $Q$ and then computing the conditional probability of another variable. Additionally, we utilize the VE algorithm to compute the same probabilities based on the local model \(\mathcal{B}_H\), denoted as Con+VE.

	
	\item In each network \(\mathcal{B}\), the inference times for marginal and conditional probabilities are accumulated separately, and the total time is divided by 100 (the number of $Q$) to obtain the average computation time, which serves as the required time for the algorithm to compute marginal and conditional probabilities in \(\mathcal{B}\). Additionally, the Con+VE method also includes the time for finding the d-convex hull. 
\end{itemize}

\begin{table}[htbp]
\centering
\caption{Times Required for Computing Conditional Probabilities Using Four Algorithms on Networks of Varying Scales}
\begin{tabular*}{\textwidth}{@{\extracolsep{\fill}}cccccc@{\extracolsep{\fill}}}
\hline
Network & Inference Times & \textbf{$T_{VE}$} & \textbf{$T_{BP}$} & \textbf{$T_{Improved}$} & \textbf{$T_{Our}$} \\ \hline
Alarm & 60624 & 77.9771 & 80.6846 & 62.5647 & \textbf{10.0977} \\ 
Hailfinder & 772416 & 1279.9336 & 1362.1269 & 750.4221 & \textbf{62.5342} \\ 
Hepar2 & 6688 & 16.5891 & 17.4625 & 11.2600 & \textbf{1.4639} \\ 
Win95pts & 2560 & 3.4633 & 3.6160 & 2.1355 & \textbf{0.6826} \\ 
Andes & 2560 & 27.1642 & 28.3393 & 12.2174 & \textbf{10.8166} \\ 
Pigs & 94041 & 152.4072 & 166.1403 & 56.5318 & \textbf{15.5688} \\ 
Link & 18688 & 304.8775 & 316.8837 & 127.0781 & \textbf{7.4304} \\ \hline
\end{tabular*}
\label{tab-1}
\end{table}

During marginal calculations, the improved LP algorithm could not optimize computations through relevance reasoning and incremental updates to enhance efficiency. Furthermore, in the Link network, the storage and computational demands of the lazy propagation algorithm increased significantly, resulting in reduced inference efficiency; thus, we excluded these results. The experimental results, as shown in Table \ref{tab-1}, indicate that in low-dimensional networks (Alarm and Andes), our method is slightly less efficient in conditional probability calculation due to the need to search for d-convex hulls. However, in high-dimensional networks (Pigs and Link), this dimensionality reduction preprocessing significantly improves the efficiency of conditional probability calculations. Notably, our method consistently achieves superior performance in marginal probability inference across networks of varying dimensions.

In addition, to evaluate the inference accuracy of the local models obtained through parameter learning, we designed the following experiments:
\begin{itemize} 
\item For $n \in [20, 40, \dots, 100]$, 10 directed trees are randomly generated, and edges are added to the tree with a probability $p \in [0.1, 0.2]$. The maximum number of parent nodes per node is also restricted to 3.

\item In each network, 5 sets of interest variables $Q$ (where $|Q| = 5$) are randomly selected, and the corresponding d-convex hull $H$ is searched. Then, local data is used to parameterize the d-convex hull, resulting in submodels $\mathcal{B}\prime_H$, with sample sizes of 1000 and 5000.

\item For each network $\mathcal{B}$, we compute the average KL divergence between the 5 sets of models $\mathcal{B}^\prime_H$ and $\mathcal{B}_H$, where the latter is inferred from model $\mathcal{B}$ and can be considered as the true model on $H$.

\item For each scale, the average KL divergence across the 10 Bayesian networks is computed.
\end{itemize}

\begin{table}[!t]
\caption{The average KL divergence from the true model $\mathcal{B}_H$ to $\mathcal{B}^\prime_H$ for different network sizes.}
\centering
\begin{tabular*}{\textwidth}{@{\extracolsep{\fill}}cccccc@{\extracolsep{\fill}}}
\toprule
\multirow{2}{*}{$n$} & \multirow{2}{*}{Sample size} & \multicolumn{4}{c}{Probability of adding edges} \\
\cmidrule{3-6}
 & & 0.1 & 0.05 & 0.01 & 0.005 \\
\midrule
\multirow{2}{*}{20} & 1000 & 0.0129 & 0.0191 & 0.0050 & 0.0099 \\
 & 5000 & 0.0033 & 0.0031 & 0.0011 & 0.0018 \\
\cline{1-6}
\multirow{2}{*}{40} & 1000 & 0.0156 & 0.0245 & 0.0119 & 0.0113 \\
 & 5000 & 0.0033 & 0.0068 & 0.0019 & 0.0008 \\
\cline{1-6}
\multirow{2}{*}{60} & 1000 & 0.0283 & 0.0213 & 0.0137 & 0.0133 \\
 & 5000 & 0.0049 & 0.0030 & 0.0031 & 0.0038 \\
\cline{1-6}
\multirow{2}{*}{80} & 1000 & 0.0119 & 0.0194 & 0.0162 & 0.0141 \\
 & 5000 & 0.0053 & 0.0053 & 0.0023 & 0.0045 \\
\cline{1-6}
\multirow{2}{*}{100} & 1000 & 0.0248 & 0.0246 & 0.0168 & 0.0115 \\
 & 5000 & 0.0046 & 0.0053 & 0.0040 & 0.0032 \\
\bottomrule
\end{tabular*}
\label{tab-4}
\end{table}

The experimental results are presented in Table \ref{tab-4}. The results show that as the sample size increases, the submodel $\mathcal{B}^\prime_H$ obtained through parameter learning increasingly approximates the true submodel. Therefore, when the true Bayesian network has a very high dimension, making it difficult to construct local models, the submodel $\mathcal{B}^\prime_H$ can be used for rapid inference, yielding almost exact results.

\section{Conclusion and Discussion}\label{sec-6}

This work focuses on providing a solution for structural dimension reduction in BNs, which can greatly improve the efficiency of probabilistic inference. We established the equivalence between d-convex subgraphs and M-collapsibility. This enables us to transform the problem of finding collapsible sets in graphical models into the problem of finding d-convex subgraphs, thereby converting an abstract statistical problem into an intuitive one in graph theory. Empirical studies support that the proposed mechanism performs excellently in most cases.

An important direction for future research is to leverage d-convex subgraphs in constructing decomposition trees of DAGs. While some researchers have developed d-separation trees for DAGs, these efforts primarily focused on improving the efficiency of structural learning rather than statistical inference \cite{xie2006decomposition}. In the future, we intend to develop a decomposition theory of DAGs to enhance the efficiency of estimation and testing in BNs.

{\bf Limitations}: When the network is highly complex or the selected set of variables of interest is large, the effectiveness of the CMDSA algorithm in dimensionality reduction may be limited. However, such networks are challenging for any exact inference algorithm. Moreover, if the set of variables of interest changes, it requires searching the d-convex hull again to establish a new local model.








\appendix

\section*{Basic results}\label{new_N.0}

\begin{lemma}\label{new-lemma-1}
	If $\{u,v\}\in \mathcal{I}_G(R)$, then there exists a d-connected component $M\subseteq V\setminus R$ in $G$ w.r.t. $R$ such that $u,v\in mb_G(M)$.
\end{lemma}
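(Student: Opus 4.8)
The plan is to build the required component from the moral-graph path furnished by Lemma~\ref{thm-3.2}. Setting $M:=V\setminus R$, the equivalence (i)$\Leftrightarrow$(ii) of Lemma~\ref{thm-3.2} turns the hypothesis $\{u,v\}\in\mathcal I_G(R)$ into a path $\rho_{uv}=(u=w_0,w_1,\dots,w_m=v)$ in $(G_{An_G(\{u,v\})})^m$ whose interior $M_0:=\{w_1,\dots,w_{m-1}\}$ is contained in $M=V\setminus R$. First I would check $M_0\neq\emptyset$: if $m=1$, then $u,v$ would be adjacent in $(G_{An_G(\{u,v\})})^m$ yet non-adjacent in $G$, forcing a common child $c\in An_G(\{u,v\})$ with $u\to c$ and $v\to c$. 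Since $c\neq u,v$ we would have $c\in an_G(\{u,v\})$, giving a directed path $c\rightsquigarrow u$ or $c\rightsquigarrow v$ and hence a directed cycle, contradicting acyclicity. Thus $m\geq 2$ and $M_0$ is a nonempty subset of $V\setminus R$.

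Next I would record the membership $u,v\in mb_G(M_0)$. The moral adjacency between $u$ and $w_1$ arises either from an edge $u\overset{G}\sim w_1$ or from a common child of $u$ and $w_1$; in the first case $u\in pa_G(w_1)\cup ch_G(w_1)$ and in the second $u\in pa_G(ch_G(w_1))$, so either way $u\in pa_G(w_1)\cup ch_G(w_1)\cup pa_G(ch_G(w_1))$. As $u\in R$ we have $u\notin M_0$, whence $u\in mb_G(M_0)$; applying the same reasoning to $w_{m-1}$ yields $v\in mb_G(M_0)$.

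The core step is to show $M_0$ is d-connected in $G$ w.r.t. $R$. Given $w_i,w_j\in M_0$ I would traverse the moral subpath $w_i\sim\cdots\sim w_j$, which stays inside $M_0\subseteq V\setminus R$, and replace each moral edge by a short trail in $G$: an edge $w_t\overset{G}\sim w_{t+1}$ is kept, while a marriage edge is expanded to $w_t\to c_t\leftarrow w_{t+1}$ through a common child $c_t$. The resulting walk from $w_i$ to $w_j$ has every interior $R$-vertex a collider, since the $w_t$ lie in $V\setminus R$ and each inserted $c_t$ is a collider. The delicate point, and the main obstacle, is that this object is a walk, not a path. I would resolve it with the observation that shortcutting preserves colliderhood: if a vertex $x\in R$ occurs twice, both occurrences are colliders, so after deleting the intervening loop both edges incident to $x$ still point into $x$, keeping $x$ a collider, while vertices outside $R$ impose no constraint. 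Iterating this shortcutting produces a genuine $G$-path $l_{w_iw_j}$ with $V^o(l_{w_iw_j})\cap R\subseteq V^c(l_{w_iw_j})$, which is exactly the witness demanded by Definition~\ref{d-connectedness}; hence $M_0$ is d-connected w.r.t. $R$.

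Finally I would enlarge $M_0$ to a maximal d-connected set. Since $V$ is finite and $M_0$ is d-connected w.r.t. $R$, a greedy extension gives a maximal d-connected set $M\subseteq V\setminus R$ with $M_0\subseteq M$, i.e.\ a d-connected component containing $M_0$ (any maximal element of the nonempty finite family of d-connected sets containing $M_0$ is automatically maximal among all d-connected sets). Monotonicity of the Markov boundary modulo membership---if $a\in mb_G(A)$, $A\subseteq B$ and $a\notin B$, then $a\in mb_G(B)$---combined with $u,v\in mb_G(M_0)$ and $u,v\in R$ (so $u,v\notin M$) gives $u,v\in mb_G(M)$, which completes the argument. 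I expect the only genuine difficulty to be the walk-to-path reduction in the d-connectedness step; the remaining steps are bookkeeping with the definitions of the moral graph, the Markov boundary, and the d-connected component.
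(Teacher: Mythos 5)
Your proof is correct, but it takes a genuinely different route from the paper's. The paper argues directly on the inducing path $l_{uv}$ in $G$: since $V^o(l_{uv})\cap R\subseteq V^c(l_{uv})$, every subpath of $l_{uv}$ between two of its non-$R$ vertices already witnesses Definition~\ref{d-connectedness}, so $V(l_{uv})\setminus R$ is d-connected w.r.t.\ $R$ with no further work; one then extends to a component $M$ and reads off $u,v\in mb_G(M)$ from the edges of $l_{uv}$ incident to $u$ and $v$. You instead pass through the moral-graph characterization (ii) of Lemma~\ref{thm-3.2}, which forces you to re-expand marriage edges into v-structures and then perform the walk-to-path reduction with the collider-preservation argument --- precisely the technical difficulty the direct argument sidesteps, since an inducing path is already a path in $G$. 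Your detour is sound (the shortcutting step is handled correctly: every occurrence of an $R$-vertex on your walk is a collider, so colliderhood survives deletion of loops), and it has the merit of making explicit two points the paper's one-line proof glosses over: that the candidate set is nonempty (your acyclicity argument ruling out a lone common child), and exactly why $u,v$ land in $mb_G(M)$ (your case split on real edge versus marriage, plus monotonicity of the Markov boundary). In the paper's version the analogous facts also need a small hidden case analysis --- e.g.\ when the vertex of $l_{uv}$ adjacent to $u$ lies in $R$ it must be a collider whose other parent is a non-$R$ path vertex --- so your extra bookkeeping is not wasted, just attached to a longer road.
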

\begin{proof}
	Let $l_{uv}$ be an inducing path of $R$ that connects $u$ and $v$ in $G$. Then $V^o(l_{uv})\cap R\subseteq V^c(l_{uv})$, implying  $V(l_{uv})\backslash R$ is d-connected w.r.t $R$ by d-connectedness. Hence there exists a d-connected component $M\subseteq V\backslash R$ in $G$ w.r.t. $R$ that contains $V(l_{uv})\backslash R$, indicating that $u,v\in mb_G(M)$.
\end{proof}

\begin{lemma}[Inheritance]\label{new-lemma-2}
	Let $G=(V,E)$ be a DAG and $R\subseteq V$ a d-convex subset of $G$. For $R^{\prime}\subseteq R$, we have that $R^{\prime}$ is d-convex in $G_R$ if and only if $R^{\prime}$ is d-convex in $G$.
\end{lemma}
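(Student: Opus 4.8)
The plan is to bypass inducing paths entirely and argue through the independence-model characterisation of d-convexity supplied by Theorem~\ref{thm-3.5}. The key observation is that both conditions in the statement are, by that theorem, equalities of projected independence models sharing the \emph{common} right-hand side $I(G_{R'})$, so the whole lemma reduces to showing that the two left-hand projections coincide.

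First I would translate each side. Since $R'\subseteq R\subseteq V$ and $(G_R)_{R'}=G_{R'}$ (an induced subgraph of an induced subgraph is the induced subgraph on the smaller set), Theorem~\ref{thm-3.5} applied to the ambient DAG $G$ gives that $R'$ is d-convex in $G$ iff $I(G)_{R'}=I(G_{R'})$, while the same theorem applied to the ambient DAG $G_R$ gives that $R'$ is d-convex in $G_R$ iff $I(G_R)_{R'}=I(G_{R'})$.

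Next I would exploit the hypothesis that $R$ is d-convex in $G$: Theorem~\ref{thm-3.5} then yields $I(G)_R=I(G_R)$. Projecting this identity once more onto $R'$ and using the transitivity of projection $(I(G)_R)_{R'}=I(G)_{R'}$ — which is immediate from the definition of $I(\cdot)_A$, since iterating the restriction to triples contained in $R$ and then in $R'$ is the same as restricting directly to triples contained in $R'$ — I obtain $I(G_R)_{R'}=(I(G)_R)_{R'}=I(G)_{R'}$. Substituting this into the two translated conditions makes them read $I(G)_{R'}=I(G_{R'})$ on both sides, hence they are equivalent, which is exactly the claim.

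The argument is essentially bookkeeping, so there is no deep obstacle; the only points requiring care are the two elementary identities $(G_R)_{R'}=G_{R'}$ and $(I(G)_R)_{R'}=I(G)_{R'}$, together with the observation that the common target $I(G_{R'})$ is genuinely the same object in both characterisations. As a sanity check, or an alternative route should one prefer to avoid invoking Theorem~\ref{thm-3.5} for the sub-DAG, one may instead use Lemma~\ref{thm-3.2}(iv): for $u,v\in R'$ adjacency in $G_R$ and in $G$ coincide, and since $I(G)_R=I(G_R)$ one has $u\indep v\mid Z\,[G]\iff u\indep v\mid Z\,[G_R]$ for every $Z\subseteq R'\setminus\{u,v\}$; thus a separating set inside $R'\setminus\{u,v\}$ exists in $G$ precisely when it exists in $G_R$, and the two d-convexity statements then agree pair by pair.
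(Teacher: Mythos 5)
Your proposal is correct and is essentially the paper's own proof spelled out in full: the paper's one-line argument is exactly the chain $I(G_{R'})=I(G_R)_{R'}=I(G)_{R'}$ obtained from Theorem~\ref{thm-3.5} together with transitivity of projection, which is what you establish. Your version merely makes explicit the two bookkeeping identities $(G_R)_{R'}=G_{R'}$ and $(I(G)_R)_{R'}=I(G)_{R'}$ that the paper leaves implicit.
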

\begin{proof}
	This follows from  $I(G_{R^{\prime}})=I(G_{R})_{R^{\prime}}=I(G)_{R^{\prime}}$ by Theorem~\ref{thm-3.5}.
\end{proof}

\begin{lemma}\cite{xie2009collapsibility}\label{lem:xie-geng}
	Given a BN $\mathcal{B}=(G,\mathcal{P}(G))$ with $G=(V,E)$ and $H\subseteq V$. Under faithfulness, the following properties are equivalent:
	\begin{itemize}
		\item[(i)] $\mathcal{P}(G)_H=\mathcal{P}(G_H)$;
		\item[(ii)] $I(G)_H=I(G_H)$.
	\end{itemize}    
\end{lemma}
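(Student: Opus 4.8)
I would prove the two implications separately, relying throughout on two standing facts. First, for any DAG $K$ a distribution on its vertices is Markov w.r.t.\ $K$ if and only if it obeys every d-separation encoded by $K$ (the equivalence of factorization and the global Markov property for DAGs); consequently $\mathcal{P}(G_H)$ is \emph{exactly} the set of distributions over $x_H$ satisfying all statements in $I(G_H)$. Second, the inclusion $I(G)_H\subseteq I(G_H)$ always holds (noted in Section~\ref{sec-2}), so the substantive content of (ii) is the reverse inclusion $I(G_H)\subseteq I(G)_H$.

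\emph{From (i) to (ii).} Assume $\mathcal{P}(G)_H=\mathcal{P}(G_H)$. By faithfulness fix $Q\in\mathcal{P}(G)$ with $I(Q)=I(G)$. Its marginal $Q_H$ lies in $\mathcal{P}(G)_H=\mathcal{P}(G_H)$, hence $Q_H$ satisfies every $\langle X,Y\mid Z\rangle\in I(G_H)$. Since such a statement only involves $X,Y,Z\subseteq H$, satisfaction in $Q_H$ is the same as $x_X\indep x_Y\mid x_Z\,[Q]$, so $\langle X,Y\mid Z\rangle\in I(Q)=I(G)$ and therefore belongs to $I(G)_H$. This yields $I(G_H)\subseteq I(G)_H$ and, with the standing inclusion, gives (ii).

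\emph{From (ii) to (i).} Assume $I(G)_H=I(G_H)$. For the inclusion $\mathcal{P}(G)_H\subseteq\mathcal{P}(G_H)$, any $P\in\mathcal{P}(G)$ has a marginal $P_H$ that obeys all of $I(G)_H=I(G_H)$, hence $P_H$ is Markov w.r.t.\ $G_H$. The nontrivial inclusion is $\mathcal{P}(G_H)\subseteq\mathcal{P}(G)_H$: given an arbitrary $P_H\in\mathcal{P}(G_H)$ I must realize it as a marginal of some $G$-Markov distribution. I would fix any $Q\in\mathcal{P}(G)$ and splice it to $P_H$ by setting $P(x_V):=P_H(x_H)\,Q(x_{V\setminus H}\mid x_H)$, which is well defined on the support of $Q_H$ and has $H$-marginal exactly $P_H$ by construction. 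It then remains to verify that $P\in\mathcal{P}(G)$.

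\emph{Main obstacle.} The crux is precisely this last verification: showing that the spliced distribution $P$ satisfies the global Markov property of $G$. I would establish the equivalent local Markov property vertex by vertex. Independences living entirely inside $H$ are supplied by $P_H$, which is $G_H$-Markov and hence, by (ii), obeys every marginal-over-$H$ d-separation of $G$; independences describing $x_{V\setminus H}$ given all of $x_H$ are inherited verbatim from $Q$, since $P(\,\cdot\mid x_H)$ and $Q(\,\cdot\mid x_H)$ agree on $x_{V\setminus H}$ and $Q\in\mathcal{P}(G)$ already satisfies $I(G)$. A general d-separation of $G$ must then be assembled from these two regimes using the semi-graphoid axioms (contraction, weak union, decomposition) together with the identity $I(G)_H=I(G_H)$. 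Some care is also needed for support/positivity so that $Q(x_{V\setminus H}\mid x_H)$ is everywhere defined; restricting to strictly positive distributions, or arguing on the support of $Q_H$, removes this technicality. I expect the bookkeeping that reduces an arbitrary $G$-d-separation to the ``within-$H$'' and ``rest-given-$H$'' pieces to be the genuinely delicate step.
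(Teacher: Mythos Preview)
The paper does not prove this lemma at all; it is quoted from \cite{xie2009collapsibility} and used as a black box. So there is no in-paper argument to compare against, and the question reduces to whether your sketch actually works.

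Your direction (i)$\Rightarrow$(ii) is fine, and so is the easy inclusion $\mathcal{P}(G)_H\subseteq\mathcal{P}(G_H)$ in (ii)$\Rightarrow$(i). The gap is in the hard inclusion $\mathcal{P}(G_H)\subseteq\mathcal{P}(G)_H$: the splice $P(x_V):=P_H(x_H)\,Q(x_{V\setminus H}\mid x_H)$ need not lie in $\mathcal{P}(G)$, and no amount of semi-graphoid bookkeeping will repair that, because the object you construct is simply not $G$-Markov. A concrete failure: take $G=A\to B\leftarrow C$ and $H=\{A,B\}$, which satisfies (ii). Choose a faithful $Q$ and a $P_H$ with $P_H(B\mid A)\neq Q(B\mid A)$. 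Then $P(A,C)=\sum_B P_H(A,B)\,Q(C\mid A,B)$ generally does \emph{not} factor as $P(A)P(C)$, so $A\not\indep C$ under $P$ and $P\notin\mathcal{P}(G)$. The two ``regimes'' you isolate---independences inside $H$, and independences of $x_{V\setminus H}$ conditioned on all of $x_H$---do not cover the local Markov condition for a vertex $v\in V\setminus H$ whose non-descendants include nodes of $H$ while $pa_G(v)$ does not contain all of $H$; the collider example is exactly this situation.

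What does work is to build the extension along the DAG rather than by conditioning on $H$ wholesale. For the collider example one can take $P(C)$ arbitrary and $P(B\mid A,C):=P_H(B\mid A)$, which is $G$-Markov with $H$-marginal $P_H$. In general the argument in \cite{xie2009collapsibility} passes through a structural characterisation of (ii) (equivalently, the d-convexity of $G_H$ in Theorem~\ref{thm-3.5}) and uses that structure to extend $P_H$: vertices of $V\setminus H$ can be removed iteratively so that at each step a childless-in-the-current-graph vertex outside $H$ is given an arbitrary kernel, while vertices of $H$ whose $G$-parents lie partly outside $H$ are handled by exploiting the absence of inducing paths. If you want a self-contained proof, replace your splice by such a structural extension; the semi-graphoid reduction you outline is not the right tool here.
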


\begin{definition}[Reachable set]\label{def:reachable-set}
	Given two non-adjacent vertices $u,v\in V$ in a DAG $G=(V,E)$, let $W\subseteq An_G(\{u,v\})$ and $U^\ast = \{z: \exists l_{uz}\subseteq G_{An_G(\{u,v\})}~ s.t.~ W \cap V^o(l_{uz})\subseteq V^c(l_{uz})\}$. We state that $U^\ast$ is the reachable set from $u$ given $W$ in $G$.
\end{definition}

\begin{proposition}\label{prop-5.6}
	Given a DAG $G=(V,E)$ and two non-adjacent vertices $u,v\in V$. The algorithm FCMDS has the complexity at most $O(|An_G(\{u,v\})|+|E_{An_G(\{u,v\})}|)$, where $|A|$ is the number of elements in $A$ and $E_{An_G(\{u,v\})}$  the edge set of the induced subgraph  $G_{An_G(\{u,v\})}$.
\end{proposition}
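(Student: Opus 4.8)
The plan is to break FCMDS into its natural phases and bound the running time of each separately, showing that every phase is linear in the size of the induced subgraph $G_{An_G(\{u,v\})}$. Because the number of phases is a fixed constant, summing the per-phase bounds yields the stated overall complexity $O(|An_G(\{u,v\})|+|E_{An_G(\{u,v\})}|)$. The guiding principle is that each phase is a graph traversal that visits every vertex and every edge of $G_{An_G(\{u,v\})}$ only a constant number of times.

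First I would analyse the construction of the ancestor closure set $An_G(\{u,v\})$. This is obtained by a reverse-reachability traversal (breadth- or depth-first) that starts at $u$ and $v$ and repeatedly follows edges backward into parents. Using a visited-marker on each vertex, the search discovers every ancestor exactly once and inspects every edge incident to a discovered vertex a constant number of times; all vertices and edges touched lie in $G_{An_G(\{u,v\})}$, so this phase costs $O(|An_G(\{u,v\})|+|E_{An_G(\{u,v\})}|)$. Next I would bound the assembly of the Markov boundary $mb_{G_{An_G(\{u,v\})}}(u)$, which by Theorem~\ref{theo-1} is guaranteed to contain the unique close minimal $uv$-d-separator. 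Collecting the parents and children of $u$ together with the co-parents (parents of the children of $u$) inside the subgraph requires only a bounded number of scans of the relevant adjacency lists, so this phase is again linear in $|An_G(\{u,v\})|+|E_{An_G(\{u,v\})}|$.

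Finally I would analyse the pruning step that extracts the minimal separator from the Markov boundary via the reachable-set computation of Definition~\ref{def:reachable-set}. This is a Bayes-ball-style traversal on $G_{An_G(\{u,v\})}$ that propagates a reachability label along paths while respecting the collider/non-collider rule. The key to keeping this linear is careful bookkeeping: each vertex carries at most two marks (reached from a child versus reached from a parent), and once a mark is set the vertex is never re-expanded along that direction, so every vertex is expanded $O(1)$ times and every edge is traversed $O(1)$ times. I expect this to be the main obstacle of the argument, since the collider rule can re-activate a vertex once a descendant becomes reachable, and one must verify this does not trigger unbounded re-processing. The standard way to close the gap is to observe that activation is monotone — marks are only ever added, never removed — so each of the constantly many marks per vertex is set at most once; this bounds the total work of the traversal by $O(|An_G(\{u,v\})|+|E_{An_G(\{u,v\})}|)$ as well. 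Adding the three linear phases then gives the claimed bound.
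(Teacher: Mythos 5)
Your proof is correct and follows essentially the same route as the paper's: decompose FCMDS into the Markov-boundary computation and the reachable-set traversal, and bound each phase linearly in the size of $G_{An_G(\{u,v\})}$. The only difference is that you reconstruct the linearity of the Bayes-ball-style traversal explicitly (constant marks per vertex, monotone activation), whereas the paper simply cites Shachter's Bayes-ball algorithm for that bound; your version is a harmless elaboration, not a different argument.
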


\begin{proof}
	In the worst case, consider that the complexity of searching the Markov boundary of $u$ is $O(|E_{An_G(\{u,v\})}|)$  and the reachable set can be found in $O(|An_G(\{u,v\})|+|E_{An_G(\{u,v\})}|)$ time by Bayes-ball algorithm \cite{shacter1998bayes}, from which it follows our result.
\end{proof}

\begin{algorithm}[h]
	\caption{Algorithm for Finding Close Minimal D-separators (FCMDS)}
	\label{alg-1}
	\begin{algorithmic}
		\STATE {\bfseries Input:} A DAG $G=(V,E)$  and two non-adjacent vertices $u,v\in V$.
		\STATE {\bfseries Output:} The minimal $uv$-d-separator $S$ close to $u$.
		\STATE Compute $mb_{G_{An_G(\{u,v\})}}(u)$; 
		\STATE Compute the reachable set $V^*$ of vertex $v$ given $mb_{G_{An_G(\{u,v\})}}(u)$.
		\STATE {\bfseries Return:} $mb_{G_{An_G(\{u,v\})}}(u)\cap V^*$
	\end{algorithmic}
\end{algorithm}

Let $S$ be the set returned from the algorithm FCMDS. For any $w\in S$, it can be easily observed that there exists a path $\rho_{uv}\subseteq (G_{An_G(\{u,v\})})^m$ connecting $u$ and $v$ such that $V^o(\rho_{uv}) \cap S = \{w\}$. Therefore, this implies that $S$ is a minimal separator in $(G_{An_G(\{u,v\})})^m$ and also a minimal d-separator in $G_{An_G(\{u,v\})}$.


\section*{The proof of the main conclusions}\label{new_N.b}
\vspace*{12pt}
\subsection*{Proof of Lemma \ref{thm-3.2}}\label{sec-A}
We say that three vertices $u,v,w$ constitute a v-structure if they induce the subgraph ``$u\rightarrow w\leftarrow v$'' in $G$ and $u,v$ are not adjacent.
\begin{proof}
	To show (i) $\Rightarrow$ (ii), suppose there is an inducing path $l_{uv}$ of $R$ between $u$ and $v$. We first prove that $v_i\in an_G(\{u,v\})$ for all $v_i\in V^o(l_{uv})$ by considering the following two cases:
	\begin{itemize}
		\item If $v_i\in V^{c}(l_{uv})$, it is evident that $v_i\in an_G(\{u,v\})$.
		
		\item If $v_i\in V^{s}(l_{uv})\cup V^{d}(l_{uv})$, let $V^{c}(l_{uv}) = \{c_0,\dots,c_k\}$. Then there exists $j$ ($0 \leq j\leq k-1$) such that $v_i\in V^{o}(l_{c_jc_{j+1}})$, where $l_{c_jc_{j+1}}\subseteq l_{uv}$. This gives that $v_i\in an_G(\{c_j,c_{j+1}\})\subseteq an_G(\{u,v\})$.
	\end{itemize}
	
	Therefore, one can see that $\rho_{uv}=(u,u_1,\dots,u_m,v)$ forms a path in $(G_{An_G(\{u,v\})})^{m}$, where $\{u_1,\dots,u_m\} =  V^{s}(l_{uv})\cup V^{d}(l_{uv})$ and $\rho_{uv}$ satisfies $V^{o}(\rho_{uv})\subseteq M$.
	
	The implication of (ii) $\Rightarrow$ (i) can be obtained by replacing, correspondingly, moral edges on $\rho_{uv}$ with {\bf v-structures} in $G_{An_G(\{u,v\})}$.
	
	We next prove (i) $\Rightarrow$ (iv). By Definition~\ref{def-1}, it follows that $V^{nc}(l_{uv})\cap (R\backslash \{u,v\}) = \emptyset$, which implies $(V^o(l_{uv})\setminus V^c(l_{uv}))\cap Z=\emptyset$ for any $Z\subseteq R\backslash \{u,v\}$. For further analysis, let $V^{c}(l_{uv})=\{c_0,\dots,c_k\}$. We show that $u  \ndep v\ |\ Z [G]$ for any $Z\subseteq R\backslash \{u,v\}$ by considering the following two cases:
	
	\begin{itemize}
		\item If $V^{c}(l_{uv})\subseteq An_G(Z)$, it is obvious that $u  \ndep v\ |\ Z [G]$;
		\item If there is at least one vertex $c_i\in V^c(l_{uv})$ is not in $An_G(Z)$ for some $i\in [k]:=\{0,1,\ldots,k\}$. Without loss of generality, we assume that $c_j$ is the vertex closest to $u$ but not in $ An_G(Z)$. If $c_j\in an_G(v)$, consider the path $l^{\prime}_{uv}=l_{uc_j}\cup l^{\prime}_{c_jv}$, where $l^{\prime}_{c_jv}$ is a directed path from $c_j$ to $v$, and $l_{uc_j}\subseteq l_{uv}$. Since $V^{nc}(l^{\prime}_{uv}) \cap Z=\emptyset$ and $V^c(l^{\prime}_{uv})\subseteq An_G(Z)$, it follows that $l^{\prime}_{uv}$ is a d-connected path between $u$ and $v$ given $Z$. On the other hand, if $c_j\in an_G(u)$, consider the path $l^{\prime\prime}_{uv}=l^{\prime}_{uc_j}\cup l_{c_jv}$, where $l^{\prime}_{uc_j}$ is a directed path from $u$ to $c_j$, and $l_{c_jv}\subseteq l_{uv}$. Then $V^{nc}(l^{\prime\prime}_{uv}) \cap Z=\emptyset$ and $V^c(l^{\prime\prime}_{uv})\subsetneq V^c(l_{uv})$, we perform a similar analysis for vertices in $V^c(l^{\prime\prime}_{uv})$. Since the number of collider vertices is finite, we can obtain a d-connected path between $u$ and $v$ given $Z$.
	\end{itemize}
	
	The implication of  (iv) $\Rightarrow$ (iii) follows easily by allowing $Z = an_G(\{u,v\})\backslash M$.
	
	In the end, we show that (iii) $\Rightarrow$ (ii). It follows from (iii) that $u$ and $v$ cannot be separated by $an_G(\{u,v\})\backslash M$ in $(G_{An_G(\{u,v\})})^{m}$, which implies the existence of a path $\rho_{uv}$ in $(G_{An_G(\{u,v\})})^{m}$ such that $V^o(\rho_{uv})\cap (an_G(\{u,v\})\backslash M) =\emptyset$. Since $V^o(\rho_{uv})\subseteq an_G(\{u,v\})$, it follows that $V^o(\rho_{uv})\subseteq M$. 
	
	This completes the proof. 
\end{proof}

\subsection*{Proof of Theorem \ref{thm-3.5}}\label{sec-N.1}

\begin{proof}
	To prove (i) $\Rightarrow$ (ii), it is obvious $I(G)_R\subseteq I(G_R)$. Let $M=V\setminus R$. From Lemma~\ref{thm-3.2} it follows that $x  \indep y\ |\ an_G(\{x,y\})\backslash M[G]$ for any two non-adjacent vertices $x,y\in R$, which yields the reversal implication by contradiction. 
	
	To prove (ii) $\Rightarrow$ (i), we assume that there are two non-adjacent vertices $x,y\in R$ which are connected by an inducing path of $R$. Consider that $x \indep y\ |\ an_{G_R}(\{x,y\}) [G_R]$ and $an_{G_R}(\{x,y\}) = an_G(\{x,y\}) \backslash M$, from which it follows that $x  \indep y\ |\ an_G(\{x,y\}) \backslash M [G]$, leading to a contradiction by Lemma~\ref{thm-3.2}.
\end{proof}

\subsection*{Proof of Theorem \ref{thm-3.7}}\label{sec-N.2}

\begin{proof}
	Without loss of generality, we prove it when $\Lambda = \{1,2\}$. Assume that $H_1\cap H_2 =H$. By Lemma~\ref{new-lemma-2}, it suffices to show that $H$ is d-convex in $G_{H_1}$.  By contradiction, suppose that $\{u,v\}\in \mathcal{I}_{G_{H_1}}(H)$ . Let $l_{uv}$ be an inducing path that connects $u$ and $v$ in $G_{H_1}$. Since $H_2\cap V^o(l_{uv}) = H\cap V^o(l_{uv}) \subseteq V^c(l_{uv})\subseteq an_G(\{u,v\})$, which implies that $\{u,v\}\in \mathcal{I}_{G}(H_2)$. This contradicts the fact that $H_2$ is d-convex in $G$.
\end{proof}

\subsection*{Proof of Theorem \ref{thm-4.3}}\label{sec-A.1}
\begin{proof} 
	To prove sufficiency, let's consider the case where $u\indep v\mid \emptyset[G]$. In this case, it is evident that $S=\emptyset \subseteq H$. Therefore, we only need to focus on the scenario where $u\ndep v\mid \emptyset[G]$.
	
	Assume for contradiction that there exists a minimal $uv$-d-separator $S$ in $G$ such that $S\nsubseteq H$. It is obvious that $S\subseteq An_G(\{u,v\})$. Now, if $u\ndep v\mid S\cap H[G_H]$, then there must exist a path $l_{uv}$ in $G_H$ such that $(V^{s}(l_{uv})\cup V^{d}(l_{uv}))\cap (S\cap H) = \emptyset$ and $V^{c}(l_{uv})\subseteq An_{G_H}(S\cap H)$. This implies that $(V^{s}(l_{uv})\cup V^{d}(l_{uv}))\cap S = \emptyset$ and $V^{c}(l_{uv})\subseteq An_G(S)$. However, this contradicts the fact that $u\indep v\mid S[G]$. Thus, it follows that $u\indep v\mid S\cap H[G_H]$.
	
	Furthermore, considering that $G_H$ is a d-convex subgraph of $G$, we can conclude that $u\indep v\mid S\cap H[G]$. This contradicts the assumption that $S$ is a minimal $uv$-d-separator in $G$. Hence, our assumption that such an $S$ exists must be false, proving the sufficiency of the statement.

	To prove the necessity of the statement, let's assume that $H$ is not d-convex. In this case, there exist non-adjacent vertices $u$ and $v$ in $H$ that are connected by an inducing path of $H$. By Lemma \ref{thm-3.2}, it is stated that any subset of $H\backslash \{u,v\}$ cannot d-separate $u$ and $v$ in $G$. However, this contradicts the fact that $H$ contains all minimal $uv$-d-separators. If $H$ truly contains all minimal $uv$-d-separators, then there exists at least one subset of $H\backslash \{u,v\}$ that d-separates $u$ and $v$ in $G$.
	
	Hence, this contradiction shows that if $H$ is not d-convex, it cannot contain all minimal $uv$-d-separators. 
\end{proof}
	
\subsection*{Proof of Theorem \ref{theo-1}}\label{sec-N.3}
\begin{proof}
	Let $G_C$ be a d-connected component of $G_{An_G(\{u,v\})}$ w.r.t. $mb_{G_{An_G(\{u,v\})}}(u)$ that contains $v$. Then $S:= mb_{G_{An_G(\{u,v\})}}(C)$, which satisfies $S\subseteq mb_{G_{An_G(\{u,v\})}}(u)$, is a $uv$-d-separator. Furthermore, $S$ is necessarily encompassed by every $uv$-d-separator contained within $mb_{G_{An_G(\{u,v\})}}(u)$. Thus, $S$ is the unique minimal $uv$-d-separator within $mb_{G_{An_G(\{u,v\})}}(u)$.
\end{proof}

\subsection*{Proof of Theorem \ref{thm-4.33}}\label{sec-A.2}

\begin{proof}
	Let $G_H$ represent the subgraph obtained through CMDSA, and let $G_{H^{\prime}}$ be the d-convex hull that contains $R$. By utilizing Theorem \ref{thm-4.3}, it can be observed that $H \subseteq H^{\prime}$. Moving forward, the goal is to establish the d-convexity of $G_H$ in $G$. 
	
	Since $H$ absorbs new vertices in each iteration and the vertices of $G$ are finite, the algorithm will always output a subset $H$ which is d-convex. If not, the algorithm continues to absorb new vertices until $H$ becomes a d-convex subset. In the worst-case scenario, $H=V$. Thus, it can be concluded that $G_H$ is indeed d-convex in $G$. Furthermore, since $G_{H^{\prime}}$ represents the d-convex hull containing $R$, we can derive that $G_H=G_{H^\prime}$.
\end{proof}

\subsection*{Proof of Proposition \ref{prop-5.8}}\label{sec-N.5}

\begin{proof}
	In the worst case, the complexity of searching for the close minimal d-separator is $O(|V|+|E|)$, as we need to traverse all the vertices and edges of the graph. Additionally, we may need to absorb at most $|V| - 2$ minimal d-separators, which adds to the overall complexity. This gives the proof.
\end{proof}




\end{document}